\def\1{\bm{1}}
\def\rxi{{\textnormal{$\xi$}}}
\def\rmZ{{\mathbf{Z}}}
\def\vzero{{\bm{0}}}
\def\vone{{\bm{1}}}
\def\vmu{{\bm{\mu}}}
\def\vb{{\bm{b}}}
\def\vc{{\bm{c}}}
\def\vf{{\bm{f}}}
\def\vh{{\bm{h}}}
\def\vt{{\bm{t}}}
\def\vu{{\bm{u}}}
\def\vv{{\bm{v}}}
\def\vx{{\bm{x}}}
\def\vz{{\bm{z}}}
\def\mA{{\bm{A}}}
\def\mB{{\bm{B}}}
\def\mC{{\bm{C}}}
\def\mD{{\bm{D}}}
\def\mF{{\bm{F}}}
\def\mG{{\bm{G}}}
\def\mL{{\bm{L}}}
\def\mM{{\bm{M}}}
\def\mP{{\bm{P}}}
\def\mQ{{\bm{Q}}}
\def\mV{{\bm{V}}}
\def\mW{{\bm{W}}}
\def\mX{{\bm{X}}}
\def\mGamma{{\bm{\Gamma}}}
\DeclareMathAlphabet{\mathsfit}{\encodingdefault}{\sfdefault}{m}{sl}
\SetMathAlphabet{\mathsfit}{bold}{\encodingdefault}{\sfdefault}{bx}{n}
\def\gB{{\mathcal{B}}}
\def\gI{{\mathcal{I}}}
\def\gJ{{\mathcal{J}}}
\newcommand{\E}{\mathbb{E}}
\newcommand{\R}{\mathbb{R}}
\newcommand{\C}{\mathbb{C}}
\newcommand{\softmax}{\mathrm{softmax}}
\newcommand{\diag}{\mathrm{diag}}
\newcommand{\op}{\mathrm{op}}
\DeclareMathOperator*{\argmax}{arg\,max}
\providecommand{\norm}[1]{\lVert#1\rVert}
\newtheorem{thm}{Theorem}
\newtheorem{lem}{Lemma}
\newtheorem{rmk}{Remark}
\newcommand{\pos}{{\mathbf{pos}}}
\newcommand{\ctx}{{\mathbf{ctx}}}
\newcommand{\resid}{{\mathbf{resid}}}
\newcommand{\res}{{\mathrm{res}}}
\newcommand{\cvec}{{\mathbf{cvec}}}
\newcommand{\head}{{\mathrm{head}}}
\newcommand{\incoh}{{\mathrm{incoh}}}
\newcommand{\Embed}{{\texttt{Embed}}}
\newcommand{\TFLayer}{{\texttt{TFLayer}}}
\newcommand{\LN}{{\texttt{LN}}}
\newcommand{\FFN}{{\texttt{FFN}}}
\newcommand{\MHA}{{\texttt{MHA}}}
\newcommand{\AttnHead}{{\mathrm{AttnHead}}}
\newcommand{\diagg}{{\mathrm{diagg}}}
\newcommand{\spann}{{\mathrm{span}}}
\newcommand*{\rom}[1]{\expandafter\@slowromancap\romannumeral #1@}
\newcolumntype{C}[1]{>{\centering\arraybackslash}m{#1}}
\title{Uncovering hidden geometry in Transformers via disentangling position and context
}
\author{
Jiajun Song\thanks{National Key Laboratory of General Artificial Intelligence,
Beijing Institute for General Artificial Intelligence (BIGAI),
Beijing 100080, China,
\texttt{songjiajun@bigai.ai}}, \;\; 
Yiqiao Zhong\thanks{
Department of Statistics,
University of Wisconsin--Madison,
WI, 53706, USA,
\texttt{yiqiao.zhong@wisc.edu}} 
}
\begin{document}

\maketitle







\begin{abstract}
Transformers are widely used to extract semantic meanings from input tokens, yet they usually operate as black-box models. In this paper, we present a simple yet informative decomposition of hidden states (or embeddings) of trained transformers into interpretable components. 
For any layer, embedding vectors of input sequence samples are represented by a tensor $\vh \in \R^{C \times T \times d}$. Given embedding vector $\vh_{c,t} \in \R^d$ at sequence position $t \le T$ in 
a sequence (or context) $c \le C$, extracting the mean effects yields the decomposition
\[
\vh_{c,t} = \vmu + \pos_t + \ctx_c + \resid_{c,t}
\]
where $\vmu$ is the global mean vector, $\pos_t$ and $\ctx_c$ are the mean vectors across contexts and across positions respectively, and $\resid_{c,t}$ is the residual vector. For popular transformer architectures and diverse text datasets, empirically we find pervasive mathematical structure: (1) $(\pos_t)_{t}$ forms a low-dimensional, continuous, and often spiral shape across layers, (2) $(\ctx_c)_c$ shows clear cluster structure that falls into context topics, and (3) $(\pos_t)_{t}$ and $(\ctx_c)_c$ are 
nearly orthogonal.
We argue that smoothness is pervasive and beneficial to transformers trained on languages, and our decomposition leads to improved model interpretability.
\end{abstract}

\doparttoc 
\faketableofcontents 
\part{} 

\section{Introduction}
\label{sec:intro}

Transformers \citep{vaswani2017attention} are practical neural network models that underlie recent successes of large language models (LLMs) \citep{brown2020language, bubeck2023sparks}. 
Unfortunately, transformers are often used as black-box models due to lack of in-depth analyses of 
 internal mechanism, which raises concerns such as lack of interpretability, model biases, security issues, etc., \citep{bommasani2021opportunities}. 
 
In particular, it is poorly understood what information embeddings from each layer capture. We identify two desiderata: (1) internal quantitative measurements, particularly for the intermediate layers; (2) visualization tools and diagnostics tailored to transformers beyond attention matrix plots. 

\begin{figure*}[t!]
\centering
\includegraphics[width=0.98\textwidth]
{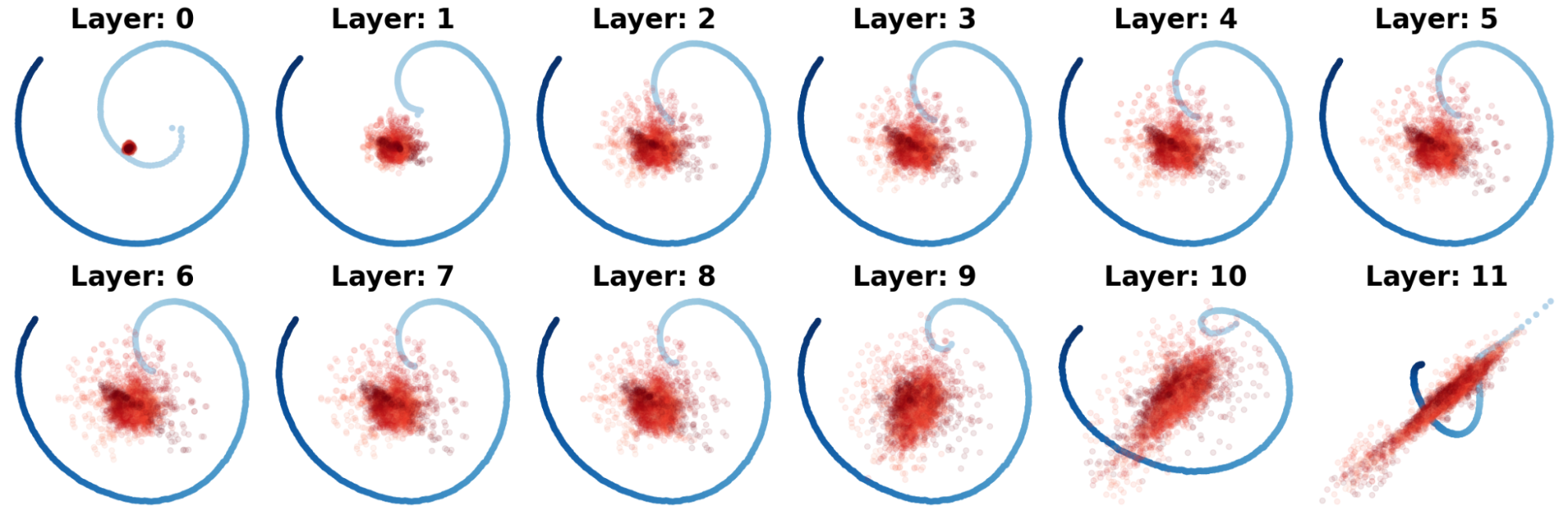}
\caption{\textbf{PCA visualization} of positional basis ({\color{blue}blue}) and cvecs ({\color{red} red}) from GPT-2 on OpenWebText. For every layer $\ell$, each $\pos_t^{(\ell)}$ and randomly selected $\cvec_{c,t}^{(\ell)}$ 
are projected using top-$2$ principal directions of $(\pos_t^{(\ell)})_{t \le T}$. Darker blue/red colors correspond to larger $t$. Principal components have dramatically \textbf{increasing scales} across layers, but for aesthetic purposes we rescaled all plots.}\label{fig:PCA}
\end{figure*}

Let us introduce basic notations. An input sequence consists of $T$ consecutive tokens (e.g., words or subwords), and a corpus is a collection of all input sequences. Let $C$ be the total number of input sequences and $c \le C$ denote a generic sequence, which may be represented by  $\vx_{c,1},\ldots,\vx_{c,T}$ where each $\vx_{c,t}$ corresponds to a token. We start from the initial static (and positional) embeddings $(\vh_{c,t}^{(0)})_{t \le T}$ and then calculate the intermediate-layer embeddings $(\vh_{c,t}^{(\ell)})_{t \le T}$:
\begin{align*}
&\vh^{(0)}_{c,1}, \ldots, \vh^{(0)}_{c,T} = \Embed(\vx_{c,1}, \ldots, \vx_{c,T}) \\
&\vh^{(\ell)}_{c,1}, \ldots, 
\vh^{(\ell)}_{c,T} = \TFLayer_\ell(\vh^{(\ell-1)}_{c,1}, \ldots, \vh^{(\ell-1)}_{c,T}),
\end{align*}
for $\ell=1,\ldots,L$, where $\Embed$ and $\TFLayer_\ell$ are general mappings. This general definition encompasses many transformer models, which depend on attention heads defined as follows. Given $d_\head \le d$ and input matrix $\mX \in \R^{T \times d}$, for trainable weights $\mW^q, \mW^k, \mW^v \in \R^{d \times d_\head}$, define
\begin{equation}\label{AttnHead}
\AttnHead(\mX) = \softmax\left(\frac{\mX \mW^q (\mW^k)^\top \mX^\top} {\sqrt{d_{\head}}} \right)\mX \mW^v.
\end{equation}
Multi-head attention heads, denoted by $\MHA$, are essentially the concatenation of many attention heads. Denote a generic fully-connected layer by $\FFN(\vx) = \mW_2\max\{\vzero, \mW_1 \vx + \vb_1\} + \vb_2$ given any $\vx\in \R^d$ for trainable weights $\mW_1 \in \R^{d'\times d}, \mW_2\in\R^{d\times d'},  \vb_1 \in \R^{d'},\vb_2 \in \R^d$ (often $d'=4d$), and let $\LN$ be a generic layer normalization layer. The standard transformer is expressed as
\begin{align*}
&\vh_c^{(\ell+0.5)} = \vh_c^{(\ell)} + \MHA^{(\ell)}(\LN^{(\ell,1)}(\vh_c^{(\ell)})), \\
&\vh_{c}^{(\ell+1)} = \vh_{c}^{(\ell+0.5)} + \FFN^{(\ell)}(\LN^{(\ell,2)}((\vh_{c}^{(\ell+0.5)}))) 
\end{align*}
where $\vh_c^{(\ell+0.5)} = (\vh_{c,1}^{(\ell+0.5)},\ldots,\vh_{c,T}^{(\ell+0.5)})$ and $\vh_c^{(\ell)} = (\vh_{c,1}^{(\ell)},\ldots,\vh_{c,T}^{(\ell)})$.

\subsection{A mean-based decomposition}

For each embedding vector $\vh^{(\ell)}_{c,t} \in \R^d$ from any trained transformer, consider the decomposition
\begin{equation}
\vh_{c,t}^{(\ell)} = \vmu^{(\ell)} + \pos_t^{(\ell)} + \ctx_c^{(\ell)} + \resid_{c,t}^{(\ell)},  \label{decomp}
\end{equation}
\begin{align}
    &\vmu^{(\ell)} := \frac{1}{CT} \sum_{c,t} \vh_{c,t}^{(\ell)}, ~\pos_t^{(\ell)} := \frac{1}{C} \sum_c \vh_{c,t}^{(\ell)} - \vmu^{(\ell)}, \label{def:components} \\
    &\ctx_c^{(\ell)} := \frac{1}{T} \sum_t \vh_{c,t}^{(\ell)} - \vmu^{(\ell)}\, .  \label{def:components2}
\end{align}
Each of the four components has the following interpretations. For any given layer $\ell$,
\begin{itemize}
    \item we call $\vmu^{(\ell)}$ the global mean vector, which differentiates neither contexts nor positions;
    \item we call $(\pos_t^{(\ell)})_{t\le T}$ the positional basis, as they quantify average positional effects;
    \item we call $(\ctx_c^{(\ell)})_{c\le C}$ the context basis, as they quantify average sequence/context effects;
    \item we call $(\resid_{c,t}^{(\ell)})_{t\le T, c \le C}$ the residual vectors, which capture higher-order effects.
    \item In addition, we define $\cvec_{c,t}^{(\ell)} = \ctx_c^{(\ell)} + \resid_{c,t}^{(\ell)}$.
\end{itemize}
A corpus may contain billions of tokens. For practical use, in this paper $C$ is much smaller: we subsample input sequences from the corpus; for example, $C=6.4K$ in Figure~\ref{fig:PCA}. 

\paragraph{Positional basis vs.~positional embeddings.} While positional embeddings at Layer 0 is much explored in the literature (see Section~\ref{sec:related}), the structure in intermediate layers is poorly understood. In contrast, our approach offers structural insights to \textit{all} layers.

\subsection{Connections to ANOVA}
Our embedding decomposition is similar to multivariate two-way ANOVA in form. Borrowing standard terminology from ANOVA, positions and contexts can be regarded as two \textit{factors} or \textit{treatments}, so viewing the embedding $\vh_{c,t}$ as the response variable, then positional/context bases represent mean effects.

\paragraph{On terminology.} (i) We use \textit{context} to refer to a sequence since its tokens collectively encode context information. (ii) We call positional/context basis for convenience. A more accurate term is \textit{frame} or \textit{overcomplete basis}, since $(\pos_t^{(\ell)})_{t\le T}$ and $(\ctx_t^{(\ell)})_{c\le C}$ are often linearly dependent.

\subsection{Accessible reproducibility} 
We provide a fast implementation via Google Colab that reproduces most of the figures and analysis for GPT-2 (under several minutes with the GPU option):
\begin{center}
{\footnotesize \url{https://colab.research.google.com/drive/1ubsJQvLkOSQtiU8LoBA_79t1bd5-5ihi?usp=sharing} }.
\end{center}
The complete implementation, as well as additional plots and measurements, can be found on the following GitHub page.
\begin{center}
\url{https://github.com/JiajunSong629/uncover-hidden-geometry}
\end{center}

\subsection{Notations}
For a vector $\vx$, we denote its $\ell_2$ norm by $\norm{\vx}$. For a matrix $\mA \in \R^{n \times m}$, we denote its operator norm by $\norm{\mA}_{\mathrm{op}} := \max_{\vu: \norm{\vu}=1}\norm{\mA \vu}$ and max norm by $\norm{\mA}_{\mathrm{max}} := \max_{i,j}|A_{ij}|$. We use the standard big-$O$ notation: for positive scalars $a,b$, we write $a = O(b)$ if $a \le C b$ for a constant $C$. We use $\spann(\mM)$ to denote the linear span of column vectors in $\mM$.

\begin{table*}[t!]
\caption{\textbf{Averaged (and std of) measurements across layers.} Measurements based on 6.4K samples. All values are in $[0,1]$ except `rank estimate': `relative norm' means magnitude of positional basis relative to centered embeddings; `similarity' and `incoherence' are averaged \textit{cosine similarity} (inner products of normalized vectors) between $\ctx$, and between $\ctx$ and $\pos$, respectively. We find (i) positional basis is a \textbf{low-rank} and \textbf{significant} component (ii) inter similarity $\mathbf{\ll}$ intra similarity (iii) \textbf{incoherence} between two bases.}\label{tab:main}
\begin{center}
\begin{tabular}{|C{1.2cm}|C{1.8cm}|C{2cm}|C{2cm}|C{2cm}|C{2cm}|C{2.3cm}|}
\hline
 \multicolumn{2}{|c|}{  } & \multicolumn{2}{c|}{\centering Positional basis} & \multicolumn{2}{c|}{\centering Context basis} & \multirow{2}{*}{ \vspace{-0.5cm}Incoherence} \\
 \cline{3-6} 
 \multicolumn{2}{|c|}{ } & rank estimate & relative norm & inter-cluster similarity & intra-cluster similarity&  \\

\hline
\footnotesize{NanoGPT} & \footnotesize{Shakespeare} & 
 \footnotesize{$\phantom{0}7.86~(1.96)$} & \footnotesize{$0.66~(0.28)$} & \footnotesize{--------} & \footnotesize{--------} &  \footnotesize{--------}  \\

 \hline
\multirow{2}{*}{{ \footnotesize{GPT-2}}} & \footnotesize{OpenWebText} & 
 \footnotesize{$11.38~(1.86)$} & \footnotesize{$0.84~(0.18)$} & \footnotesize{$0.10~(0.01)$} & \footnotesize{$0.44~(0.04)$} & \footnotesize{$0.051~(0.05)$} \\
\cline{2-7}
 & \footnotesize{WikiText} & \footnotesize{$11.69~(1.64)$} & \footnotesize{$0.76~(0.18)$} & \footnotesize{$0.11~(0.01)$} & \footnotesize{$0.41~(0.03)$} & \footnotesize{$0.039~(0.04)$}  \\

 \hline

\multirow{2}{*}{{ \footnotesize{BERT}}} & \footnotesize{OpenWebText} & 
 \footnotesize{$12.54~(2.73)$} & \footnotesize{$0.78~(0.18)$} & \footnotesize{$0.13~(0.04)$} & \footnotesize{$0.26~(0.04)$} & \footnotesize{$0.046~(0.05)$} \\
\cline{2-7}
 & \footnotesize{WikiText} & \footnotesize{$12.62~(2.70)$} & \footnotesize{$0.76~(0.18)$} & \footnotesize{$0.17~(0.03)$} & \footnotesize{$0.31~(0.04)$} &  \footnotesize{$0.043~(0.04)$} \\

 \hline

\multirow{2}{*}{{ \footnotesize{BLOOM}}} & \footnotesize{OpenWebText} & 
 \footnotesize{$10.23~(1.31)$} & \footnotesize{$0.30~(0.16)$} & \footnotesize{$0.15~(0.14)$} & \footnotesize{$0.32~(0.09)$} & \footnotesize{$0.158~(0.23)$} \\
\cline{2-7}
 & \footnotesize{WikiText} & \footnotesize{$10.00~(1.47)$} & \footnotesize{$0.31~(0.13)$} & \footnotesize{$0.14~(0.13)$} & \footnotesize{$0.31~(0.09)$} & \footnotesize{$0.148~(0.23)$}  \\

 \hline

\multirow{3}{*}{{ \footnotesize{Llama 2}}} & \footnotesize{OpenWebText} & 
 \footnotesize{$\phantom{0}9.38~(1.15)$} & \footnotesize{$0.14~(0.03)$} & \footnotesize{$0.17~(0.19)$} & \footnotesize{$0.43~(0.12)$} & \footnotesize{$0.190~(0.24)$} \\
\cline{2-7}
 & \footnotesize{WikiText} & \footnotesize{$\phantom{0}8.69~(0.91)$} & \footnotesize{$0.07~(0.04)$} & \footnotesize{$0.47~(0.35)$} & \footnotesize{$0.60~(0.25)$} &  \footnotesize{$0.316~(0.27)$} \\
 \cline{2-7}
 & \footnotesize{GitHub} & \footnotesize{$\phantom{0}8.69~(1.67)$} & \footnotesize{$0.21~(0.05)$} & \footnotesize{$0.17~(0.10)$} & \footnotesize{$0.40~(0.07)$} & \footnotesize{$0.189~(0.20)$} \\
\hline
\end{tabular}
\end{center}
\vspace{-0.1in}
\end{table*}

\section{Pervasive geometrical structure}\label{sec:geometry}

We apply our decomposition to a variety of pretrained transformers including GPT-2 \cite{radford2019language}, BERT \cite{devlin2018bert}, BLOOM \cite{scao2022bloom}, Llama-2 \cite{touvron2023llama}, and various datasets such as WikiText, OpenWebText, GitHub. See Section~\ref{sec:append-setup} for details. Our geometric findings are summarized below. 

\begin{enumerate}
    \item Positional basis is a significant and approximately low-rank component, forming a continuous and curving shape. 
    \item Context basis has strong cluster patterns corresponding to documents/topics.
    \item Positional basis and context basis are nearly orthogonal (or \textit{incoherent}). 
\end{enumerate}

Our findings indicate that embeddings contain two main interpretable factors, which are decoupled due to incoherence. 


\paragraph{Sinusoidal patterns are learned, not predefined.} The transformers we examined are trained from scratch including the positional embeddings (PEs). It is unclear a priori why such a smooth and sinusoidal pattern is consistently observed. Moreover, the learned sinusoidal patterns are different from the original fixed sinusoidal embedding \cite{vaswani2017attention}: for example, 
the original PE is concentrated less on the low-frequency components (Section~\ref{sec:append-compare-PE}).

\paragraph{Consistent pattern across layers and models is not solely explained by residual connections.} The geometric structure is (i) consistent across layers and (ii) agnostic to models. 
Can consistency across layers explained by residual connections? We show this is not true:
the average norm of embeddings increases by more than 100-fold in GPT-2, and the embeddings are nearly orthogonal between layer 0 and 1 (Section~\ref{sec:append-not-residual}).

\paragraph{Some exceptions.} In the last few layers, embeddings tend to be anisotropic \cite{ethayarajh2019contextual} and geometric structure may collapse to a lower dimensional space, particularly the positional basis. It is possibly due to the completion of contextualization or optimization artifacts. We also find that BERT shows higher frequency patterns, possibly due to its different training.


\section{Key properties of positional basis: low rank and low frequency}\label{sec:pos}

\begin{figure*}[t!]
\centering
\includegraphics[width=0.8\textwidth]{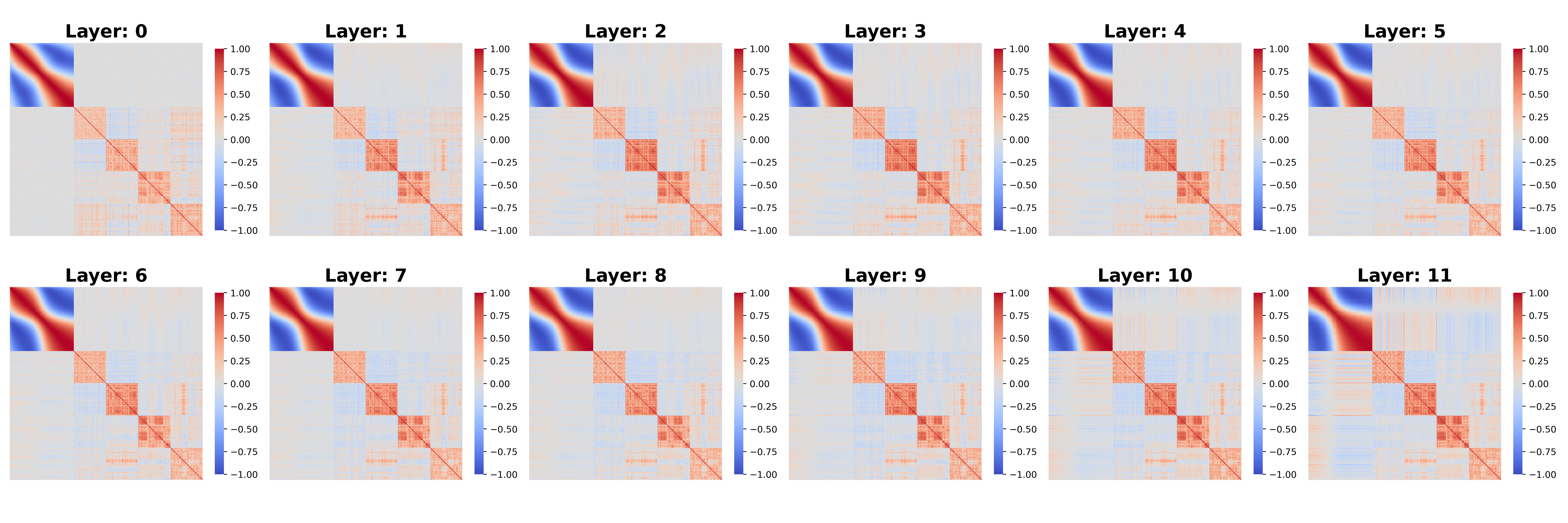}
\caption{\textbf{Normalized Gram matrix} $[\bar \mP, \bar \mC]^\top [\bar \mP, \bar \mC]$ where $\bar \mP=[\frac{\pos_1}{\norm{\pos_1}},\ldots,\frac{\pos_T}{\norm{\pos_{T}}}]$ and $\bar \mC = [\frac{\ctx_1}{\norm{\ctx_1}},\ldots,\frac{\ctx_{C}}{\norm{\ctx_C}}]$ based on GPT-2. Here, $T=128$, and $\ctx_c$ is sampled from $4$ documents with sample size $32$ in OpenWebText. We find (i) \textbf{Smoothness}, $\pos$-$\pos$ part (top left) of Gram matrix is smooth; (ii) \textbf{Incoherence}, $\pos$-$\ctx$ part (top right/bottom left) has values close to $0$; (iii) \textbf{Clustering}, $\ctx$-$\ctx$ part (bottom right) shows strong cluster patterns.}\label{fig:Gram}
\vspace{-0.1in}
\end{figure*}



In this section, we use quantitative measurements to relate smoothness to concepts in parsimonious representations.


\paragraph{Measuring smoothness.} Our notion of ``smoothness'' of the positional basis refers to its (normalized) Gram matrix:
\begin{equation}\label{def:Gram}
\mG = \bar \mP^\top \bar \mP \in \R^{T \times T}, ~ \text{where}~\bar \mP = [\frac{\pos_1}{\norm{\pos_1}},\ldots, \frac{\pos_T}{\norm{\pos_T}}]
\end{equation}
being visually smooth (mathematically, having bounded discrete derivatives). 




\subsection{Low rank via spectral analysis}

\begin{wrapfigure}[13]{r}{0.5\textwidth}
\includegraphics[width=0.45\textwidth]{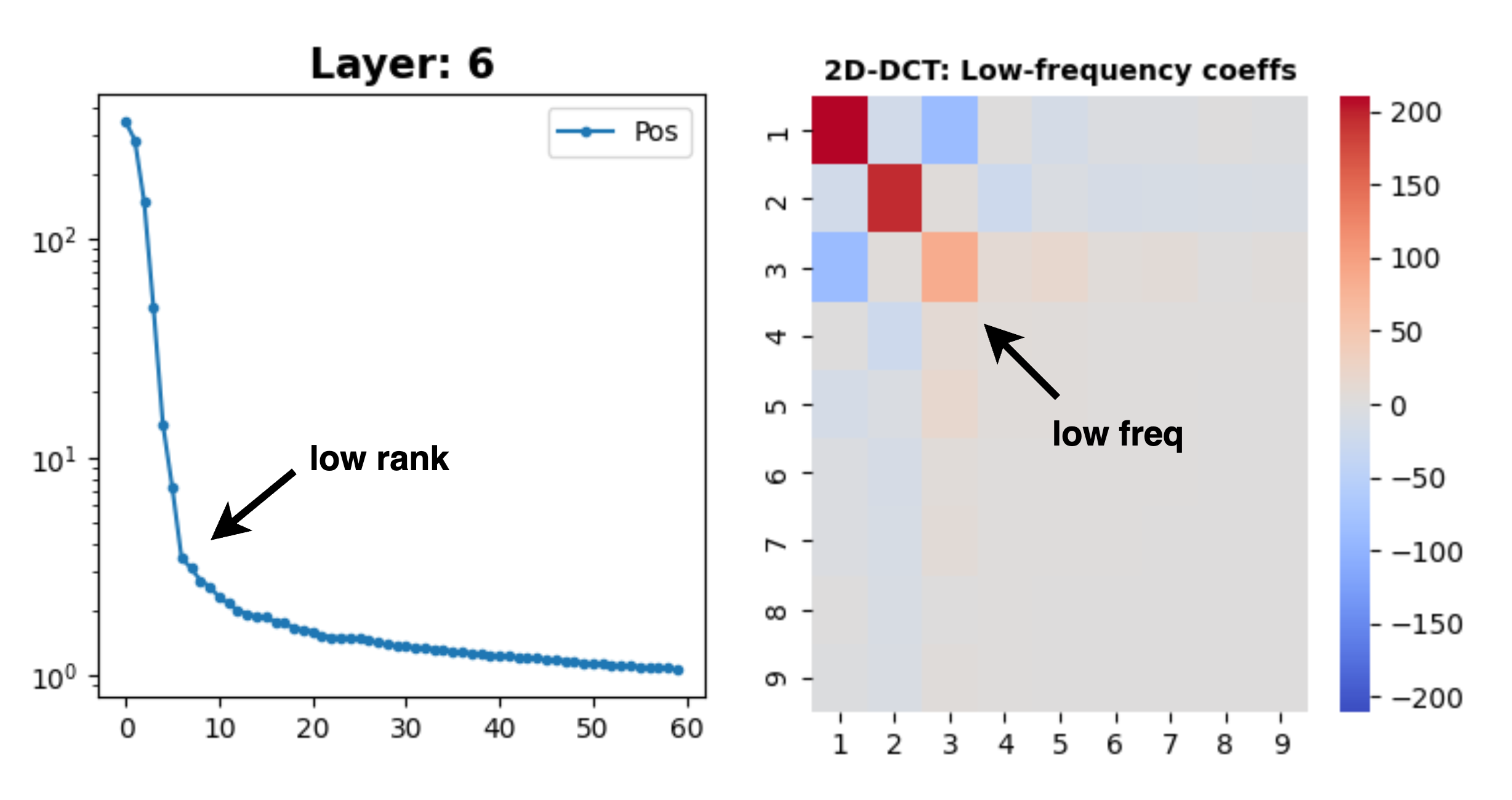}
\caption{\textbf{Spectral and Fourier analysis} based on GPT-2 model and OpenWebText. \textbf{Left:} Top-60 singular values of $\mP$. \textbf{Right:} Applying 2D discrete cosine transform to $\bar \mP^\top \bar \mP$, we show first $10$ frequency coefficients.}\label{fig:svd}
\vspace{-0.1in}
\end{wrapfigure} 


\paragraph{Low rank.} We find that the positional basis concentrates around a low-dimensional subspace. In Table~\ref{tab:main} ``rank estimate'' column, we report the rank estimate of positional basis averaged across all layers using the method of \citet{donoho2023screenot}. In Figure~\ref{fig:svd}, we plot the top singular values in descending order of $\mP=[\pos_1,\ldots, \pos_T]$. Visibly, there is a sharp change in the plot, which indicate a dominant low-rank structure. 
In Section~\ref{sec:append-lowrank}, we report detailed rank estimates.

\paragraph{Significant in relative norms.} We also find that usually, the positional basis accounts for a significant proportion of embeddings. In Table~\ref{tab:main}, we report the relative norm (averaged across layers) $\norm{\mP}_{\op} / \norm{\mM}_{\op}$, where $\mM$ contains centered embedding vectors $\vh_{c,t} - \vmu$ and columns of $\mP$ are corresponding $\pos_t$. 
The relative norms show positional basis is contributing significantly to the overall magnitude (most numbers bigger than $10\%$).


\subsection{Low frequency via Fourier analysis}\label{sec:fourier}


In Figure~\ref{fig:svd} (right), we apply the 2D discrete cosine transform to the normalized Gram matrix $\bar \mP^\top \bar \mP$; namely, we calculate the frequency matrix $\hat \mG$  by apply the (type-\rom{2}) discrete cosine transform with orthogonal matrix $\tilde \mF$:
\begin{equation*}
    \hat \mG = \tilde \mF \mG \tilde \mF^\top.
\end{equation*}
Each entry $\hat G_{ij}$ encodes the $(i,j)$-th frequency coefficient. We discover that energies are concentrated mostly in the low-frequency components, which echos the smooth and curving structure in Figure~\ref{fig:PCA}. 

\subsection{Theoretical lens: smoothness explains low-rank and low-frequency}

It is well known that the smoothness of a function is connected to fast decay or sparsity in the frequency domain \citep[Sect.~1.2.3]{pinsky2008introduction}. From the classical perspective, we establish smoothness as a critical property that induces the  observed geometry. 

\begin{center}
\textit{Smoothness of Gram matrix of positional basis induces the low-dimensional and spiral shape.}
\end{center}

For convenience, we assume that positional vectors $\pos_t$ have unit norm, so by definition,
$\pos_1+\ldots+\pos_T=\vzero $. To quantify smoothness, we introduce the definition of finite difference. As with the discrete cosine transform in 1D, we extend and reflect the Gram matrix to avoid boundary effects.

Let $\mG^{(1)} = \mG$ and $\mG^{(2)}, \mG^{(3)}, \mG^{(4)} \in \R^{T \times T}$ be defined by $\mG^{(2)}_{t,t'} = \mG_{t,T+1-t'}$, $\mG^{(3)}_{t,t'} = \mG_{T+1-t, t'}$, $\mG^{(4)}_{t,t'} = \mG_{T+1-t, T+1-t'}$ for any $t,t'=1,2,\ldots T$. We extend and reflect $\mG$ by
\begin{equation}\label{def:tildeG}
\tilde \mG := \left(\begin{array}{cc} \mG^{(1)} & \mG^{(2)} \\ \mG^{(3)} & \mG^{(4)}\end{array}\right)\; .
\end{equation}
We define the first-order finite difference by (using periodic extension $\tilde G_{t\pm2T, t'\pm 2T} = \tilde G_{t,t'}$)
\begin{equation}
[\Delta^{(1,1)} \tilde \mG]_{t,t'} = T^2\big( \tilde G_{t,t'} - \tilde G_{t-1,t'} - \tilde G_{t,t'-1} + \tilde G_{t-1,t'-1} \big)
\end{equation}
for all integers $t,t'$. Higher-order finite differences are defined recursively by $\Delta^{(m,m)} \tilde \mG = \Delta^{(1,1)} \big( \Delta^{(m-1,m-1)} \tilde \mG \big)$.

Note that $\Delta^{(m,m)} \tilde \mG$ measures higher-order smoothness of $\tilde \mG$. Indeed, if $G_{t,t'} = f(t/T,t'/T)$ for certain smooth function $f(x,y)$ defined on $[0,1]^2$, then $[\Delta^{(m,m)} \tilde \mG]_{t,t'} \approx \partial_x^m \partial_y^m f(t/T,t'/T)$.

\begin{thm}\label{thm:fourier}
   Fix positive integers $k \le T$ and $m$. Define the low-frequency vector $\vf_s = (1, \cos((s-0.5)\pi/T), \ldots, \cos ((s-0.5)(T-1)\pi / T))^\top \in \R^T$ where $s=1,\ldots,k$, and denote $\mF_{\le k} = [\vf_1,\ldots,\vf_k] \in \R^{T \times k}$. Then there exists $\mB \in \R^{k \times k}$ such that
   \begin{equation*} \label{ineq:fourier}
   \frac{1}{T} \left \lVert \mG - \mF_{\le k} \mB (\mF_{\le k} \mB)^\top \right \rVert_\op \le \frac{6}{(8k)^m} \norm{\Delta^{(m,m)} \tilde \mG}_{\max}\, .
   \end{equation*}
\end{thm}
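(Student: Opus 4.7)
The plan is to expand $\mG$ in the two-dimensional DCT-II basis, identify $\mF_{\le k}\mB(\mF_{\le k}\mB)^\top$ with the low-frequency projection of $\mG$, and bound the high-frequency tail via discrete summation by parts. The first step is to exploit the reflection and periodic extension $\tilde\mG$ of \eqref{def:tildeG}: because $\tilde\mG$ is even in each variable about $t = T+1/2$ and $2T$-periodic, its Fourier representation on $(\mathbb{Z}/2T\mathbb{Z})^2$ collapses to a cosine expansion at the DCT-II frequencies $\omega_s = (s-1/2)\pi/T$, $s=1,\ldots,T$. I would then normalize $\mF = [\vf_1,\ldots,\vf_T]$ to an orthogonal matrix $\tilde\mF = \mF\mD^{-1/2}$ (with $\mD$ the diagonal of column squared-norms), set $\mC = \tilde\mF^\top \mG \tilde\mF$, and observe that $\mG$ is PSD because it is a normalized Gram matrix, so its top-$k$ principal block $\mC_{\le k}$ is PSD as well. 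Factoring $\mC_{\le k} = \mB_0 \mB_0^\top$ and taking $\mB = \mD_{\le k}^{-1/2}\mB_0$ yields $\mF_{\le k}\mB(\mF_{\le k}\mB)^\top = \tilde\mF_{\le k}\mC_{\le k}\tilde\mF_{\le k}^\top$, which is precisely the projection of $\mG$ onto modes $(s,s')$ with $s,s' \le k$; by orthogonality of $\tilde\mF$, the residual operator norm equals that of $\mC$ with its top-$k$ block zeroed out.

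The second step is a coefficient decay bound of the form
\[
|C_{s,s'}| \;\le\; \frac{C_1\,T\,\|\Delta^{(m,m)}\tilde\mG\|_{\max}}{[(2s-1)(2s'-1)]^m},
\]
obtained by applying discrete summation by parts $m$ times in each index. Each step converts $\tilde G_{t,t'} \mapsto \tilde G_{t,t'}-\tilde G_{t-1,t'}$ and multiplies the cosine kernel by a factor of $1/(2\sin(\omega_s/2))$ in magnitude, while the boundary terms cancel thanks to the even periodic extension (this is precisely why the reflection in \eqref{def:tildeG} is needed). After $m$ iterations in each index, the inner summand becomes $[\Delta^{(m,m)}\tilde\mG]_{t,t'}/T^{2m}$ (recall the explicit $T^{2m}$ absorbed into the definition of $\Delta^{(m,m)}$), and the accumulated trigonometric factor is controlled using $\sin(x) \ge 2x/\pi$ on $[0,\pi/2]$, which gives $(2\sin(\omega_s/2))^{-1} \le T/(2s-1)$. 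Collecting powers of $T$ yields the displayed bound.

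The third step is a Schur-test bound on the operator norm of $\mC - \mC_{\le k}^{\mathrm{pad}}$: splitting the index set into the blocks $\{s \le k, s' > k\}$, $\{s > k, s' \le k\}$, and $\{s,s' > k\}$, I would control each by its maximum absolute row sum. The key tail is $\sum_{s' > k}(2s'-1)^{-m}$, which by integral comparison is at most a constant times $k^{1-m}/(m-1)$, yielding the $k^{-m}$-type scaling. Tracking the constants from the $\pi/2$-loss in $\sin(x) \ge 2x/\pi$, the normalization of $\tilde\mF$, Schur's factor-of-$2$ from combining symmetric off-diagonal blocks, and the integral tail bound, and then dividing by $T$, produces the claimed prefactor $6/(8k)^m$. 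The main obstacle will be exactly this constant-chasing: each of the inequalities contributes a numerical factor, and arranging for the $8$ (rather than, say, $2\pi$ or $4$) in the denominator requires carefully combining the $\sin$ bound with the Riemann-sum tail estimate. A secondary subtlety is justifying discrete summation by parts under the reflected periodic extension so that no spurious boundary terms appear at $t = T$ or $t = 2T$; here the four-block construction of $\tilde\mG$ and the stated periodic convention $\tilde G_{t \pm 2T,\,t' \pm 2T} = \tilde G_{t,t'}$ do exactly what is needed.
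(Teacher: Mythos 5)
Your overall architecture matches the paper's: reflect $\mG$ to $\tilde\mG$, diagonalize in the (DCT-II $\leftrightarrow$ DFT-of-the-reflected-extension) frequency basis, use positive semidefiniteness of the low-frequency block to produce the real factor $\mB$, and gain one factor of $(2\sin(\omega_s/2))^{-1}$ per discrete integration by parts. Your first two steps are sound and correspond to the paper's Lemmas on $\gamma_t$ and on $\hat\mA_{\gI,\gI}$; the bound $(2\sin(\omega_s/2))^{-1}\le T/(2s-1)$ obtained from $\sin x\ge 2x/\pi$ is exactly how the paper gets $|\gamma_t|\le 1/(8(t-1))$, i.e.\ the source of the $8$.

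The gap is in your third step. Converting the entrywise decay $|C_{s,s'}|\lesssim T\,\norm{\Delta^{(m,m)}\tilde\mG}_{\max}\,[(2s-1)(2s'-1)]^{-m}$ into an operator-norm bound on the high-frequency tail by a Schur/row-sum test does not deliver the stated rate. For $m=1$ the key tail $\sum_{s'>k}(2s'-1)^{-1}$ is a harmonic sum of order $\log(T/k)$, so your bound acquires a spurious logarithm and the theorem---asserted for every positive integer $m$, including $m=1$---is not recovered. For $m\ge 2$, the block $\{s\le k,\ s'>k\}$ has maximal row sum of order $k^{1-m}$ (attained at $s=1$, where $(2s-1)^{-m}=1$) but maximal column sum of order $k^{-m}$, so Schur's test yields only $O(k^{1/2-m})$, short of the claimed $k^{-m}$ by a factor of $\sqrt{k}$; a Frobenius-norm bound loses the same factor. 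The paper avoids any summation over frequencies: after $m$ rounds of summation by parts one has the exact identity $\mF\tilde\mG\mF^\top=\mGamma^m\,\mF(\Delta^{(m,m)}\tilde\mG)\mF^\top\,\mGamma^m$ with $\mGamma$ \emph{diagonal}, so each off block is a product (diagonal)$\times$(matrix)$\times$(diagonal), and submultiplicativity gives $\norm{(\mF\tilde\mG\mF^\top)_{\gI,\gJ}}_\op\le\max_{t'\in\gJ}|\gamma_{t'}|^m\cdot\norm{\mF(\Delta^{(m,m)}\tilde\mG)\mF^\top}_\op\le(8k)^{-m}(2T)^2\norm{\Delta^{(m,m)}\tilde\mG}_{\max}$, using only $\norm{\mF}_\op^2=2T$ and $\norm{\mM}_\op\le 2T\norm{\mM}_{\max}$. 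Summing the three off blocks and applying the inverse transform (which contributes $(2T)^{-1}$) gives exactly $6T(8k)^{-m}\norm{\Delta^{(m,m)}\tilde\mG}_{\max}$. Replace your Schur test by this submultiplicativity step and your argument closes; as written, it does not.
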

This theorem implies that if the extended Gram matrix has higher-order smoothness, namely $\norm{\Delta^{(m,m)} \tilde \mG}_{\max}$ is bounded by a constant, then even for moderate $k$ and $m$, we have approximation $\mG \approx \mF_{\le k} \mB (\mF_{\le k} \mB)^\top $. Note that $\mF_{\le k} \mB$ consists of linear combinations of low-frequency vectors. This explains why $\mG$ has a dominant low-rank and low-frequency component.

\section{Smoothness: blessing of natural languages}\label{sec:smoothness}

We demonstrate that smoothness is a natural and beneficial property learned from language data: it is robust to distribution shifts and allows efficient attention computation. This smoothness likely reflects the nature of language data.

\subsection{Smoothness is robust in out-of-distribution data}\label{sec:pos-ood}

So far, we have analyzed our decomposition and associated geometry primarily on \textit{in-distribution} samples. For example, we sample sequences from OpenWebText---the same corpus GPT-2 is pretrained on, to calculate decomposition \eqref{def:components}--\eqref{def:components2}. 

Now we sample out-of-distribution (OOD) data and conduct similar decomposition and analyses. We find that the positional basis possesses similar low-rank and spiral structure. Surprisingly, for sequences consisting of randomly sampled tokens, such structure persists. See summaries in Table~\ref{tab:ood} and details in Section~\ref{sec:append-pos-ood}.

Many LLMs can generalize well to OOD data. We believe that this generalization ability is at least partially attributed to the robustness of positional basis as evidenced here.

\begin{table}
\caption{\textbf{Robustness of positional basis}. Similar geometric structures found on \textit{OOD} samples: NanoGPT (trained on a Shakespeare dataset) evaluated on WikiText, GPT-2 and BERT (trained on language datasets) evaluated on GitHub data, BLOOM evaluated on random tokens.}\label{tab:ood}
\centering
\begin{tabular}{lrr}
\toprule
  & rank estimate of $\mP$ & ratio explained by low-freq ($K=10$)  \\ \midrule
\footnotesize{NanoGPT}  & \footnotesize{$\phantom{0}5.43~(1.84)$} & \footnotesize{$84.0\%$}    \\
 \footnotesize{GPT-2}  & 
 \footnotesize{$11.54~(1.55)$} & \footnotesize{$99.5\%$}   \\
\footnotesize{BERT}  & \footnotesize{$12.46~(2.47)$} & \footnotesize{$70.2\%$}   \\
 \footnotesize{BLOOM}  & \footnotesize{
 $10.44~(2.25)$
 } & \footnotesize{$95.6\%$} \\ \bottomrule
\end{tabular}
\end{table}

\subsection{Smoothness promotes local and sparse attention}\label{sec:local-attn}

\begin{table}
\caption{\textbf{Inheriting smoothness from positional basis}. For $K=1,3,5,10$, we apply 2D DCT and calculate the ratio\protect\footnotemark of up to $K$-th low-frequency coefficients based on GPT-2 and positional basis.}\label{tab:local-attn}
\centering
\begin{tabular}{lrrrr}
\toprule
Ratio explained & $K=1$ & $K=3$ & $K=5$ & $K=10$ \\ \midrule
$\mP \mP^\top$ & 39.4\% & 95.4\% & 98.2\% & 99.7\% \\ 
$\mP \mW \mP^\top$ & 45.9\% & 96.5\% & 98.8\% & 99.9\% \\ 
QK matrix & 84.0\% & 90.0\% & 90.4\% & 91.2\% \\ \bottomrule
\end{tabular}
\end{table}

Many attention heads in large-scale transformers for language data show local and sparse patterns \citep{beltagy2020longformer}. A usual heuristic explanation is that, most information for token prediction is contained in a local window.

We offer an alternative explanation: \textit{Local attention is a consequence of smoothness of positional basis}.

We provide our reasoning. First, in terms of smoothness, the Gram matrix $\mP \mP^\top$ is closely related to $\mP \mW \mP^\top$, where $\mW = \mW^q (\mW^k)^\top / \sqrt{d_{\head}}$. Indeed, generally a linear transformation of the positional vectors, namely $\mW \mP^\top$, should not affect smoothness much.

Second\footnotetext{Given 2D frequencies $(f_{ij})$, we calculate the ratio as $r_K = \sum_{i,j \le K} f_{ij}^2 / \sum_{i,j} f_{i,j}^2$.}, $\mP \mW \mP^\top$ is an important---often dominant---constituent of the QK matrix (QK matrix is the matrix inside softmax in \eqref{AttnHead}); see Section~\ref{sec:beyond-attn} for an example. Thus, the QK matrix can inherit the smoothness from $\mP \mW \mP^\top$.


Third, if the QK matrix (denoted by $\mB$) is smooth and maximized at the diagonal position along a certain row indexed by $t$ (the constraint $t' \le t$ is due to causal masking):
\begin{equation}\label{eq:argmax}
    \argmax_{1 \le t' \le t} B_{t, t'} = t,
\end{equation}
then positions $t'$ close to $t$ also have high QK values by smoothness. Therefore, we expect the neighboring positions around $t$ to receive high attention weights after softmax.


Our reasoning is supported by pervasive smoothness shown in Table~\ref{tab:local-attn}. Moreover, we find that the positional constituents in more than 43\% heads in GPT-2 satisfy \eqref{eq:argmax} for more than 80\% positions $t$. See Section~\ref{sec:append-local-attn} for details.

\begin{figure*}[t!]
\centering
\includegraphics[width=0.9\textwidth]{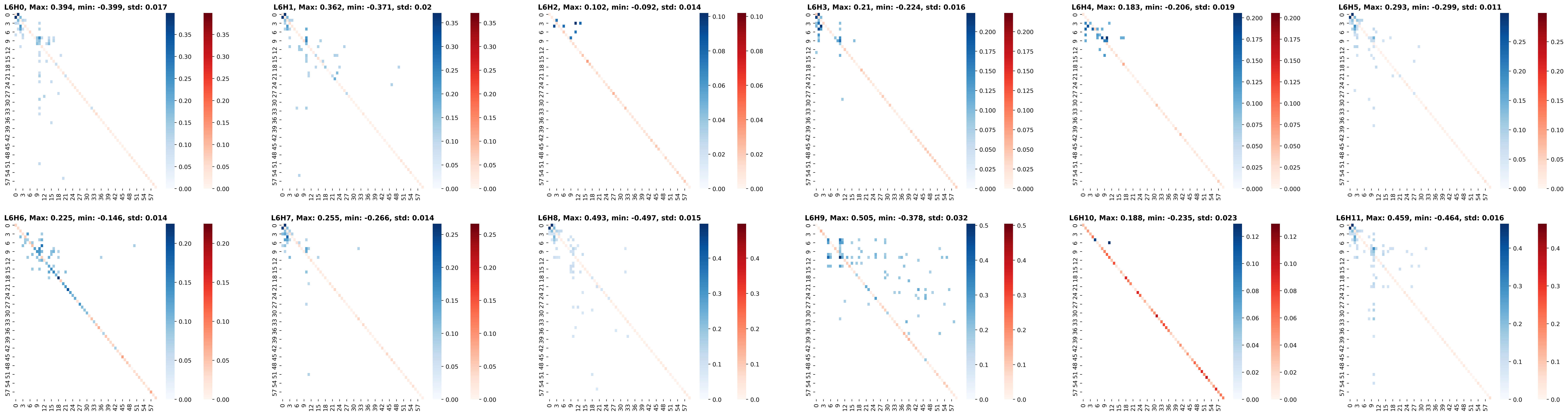}
\caption{\textbf{Decoupling trained weight matrices.} For $12$ attention heads (layer $L=6$ shown here) in GPT-2, we study the matrix $\mW = \mW^q (\mW^k)^\top / \sqrt{d_{\head}} \in \R^{d \times d} $. {\color{red}Red}: diagonal entries ${\color{red}\mD}:=\diagg(\mW)$. {\color{blue}Blue}: take off-diagonal matrix $\mW - \diagg(\mW)$, rotate it by the right singular vectors of positional basis $\mV$, then apply denoising.
Large absolute values concentrate in small top-left part ${\color{blue}\mL}$.
}\label{fig:attention-weight}
\vspace{-0.1in}
\end{figure*}

\subsection{Curse of discontinuity for arithmetic tasks}\label{sec:addition}

\begin{wrapfigure}[15]{r}{0.5\textwidth}
\includegraphics[width=0.45\textwidth]{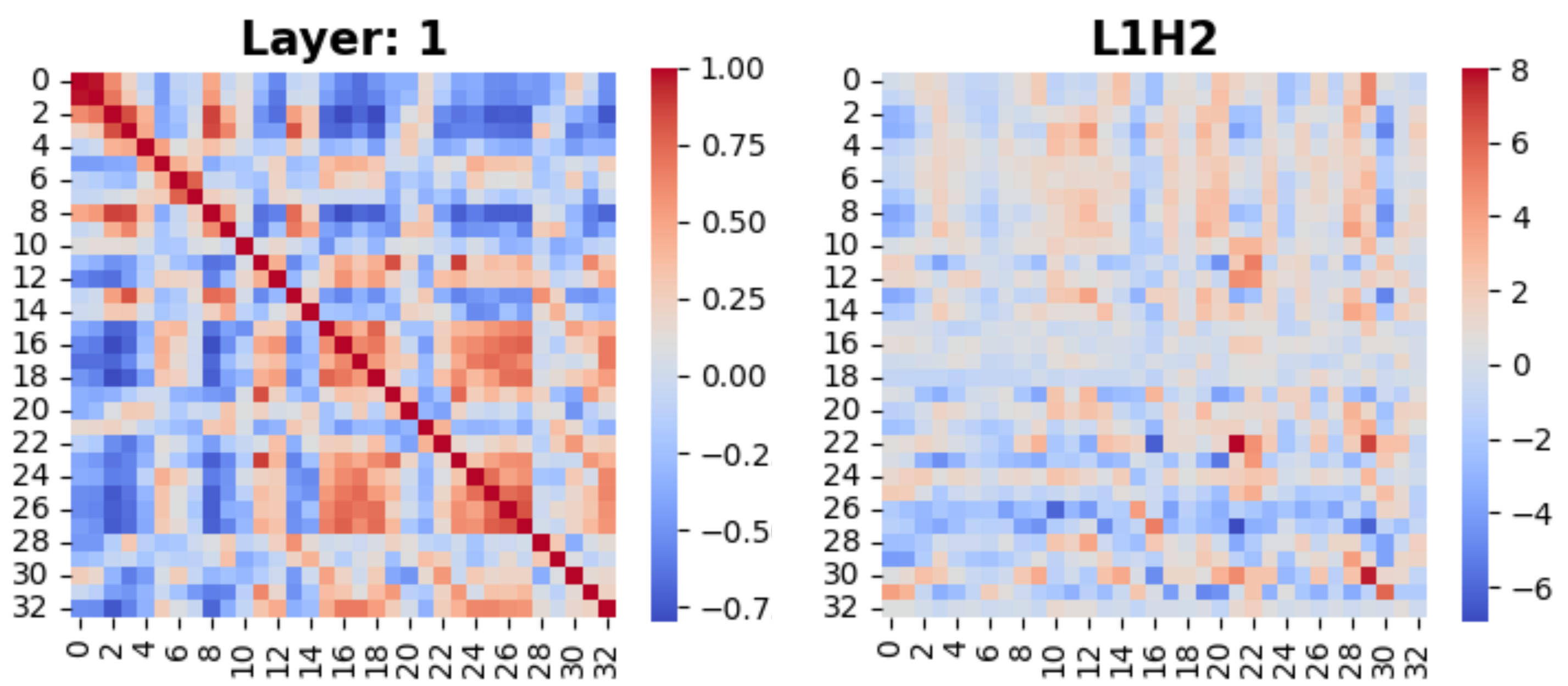}
\caption{Addition task trained on NanoGPT exhibits \textbf{nonsmooth} patterns: \textbf{discontinuity} as a consequence of non-language data training. \textbf{Left}: Gram matrix of normalized positional basis. Compare with top-left of plots in Figure~\ref{fig:Gram}. \textbf{Right}: QK matrix. 
}\label{fig:discontinuity}
\end{wrapfigure}

We find that the emergence of smoothness is data dependent: while transformers pretrained on natural/programming languages exhibit the smoothness property, they may suffer from a lack of smoothness on other pretrained data.

We explore a simple arithmetic task---\textit{Addition}, where inputs are formatted as a string ``$a + b = c$'' with $a,b,c$ represented by digits of a certain length. We sample the length of each addition component uniformly from $\{L/2, \ldots, L\}$ where $L=10$. Then, we train a 4-layer 4-head transformer (NanoGPT) with character-level tokenization to predict `$c$' based on the prompt ``$a+b=$''. We train this transformer 10,000 iterations until convergence. 

\paragraph{Nonsmooth pattern.} Figure~\ref{fig:discontinuity} shows that the Gram matrix of normalized positional basis and QK matrix are visibly discontinuous and exhibit many fractured regions. Quantitatively, the top-10 low-frequency components of Gram matrix explains around 50\% of $\sum_{ij}\hat G_{ij}^2$ (Section~\ref{sec:append-addition}), much less than 99\% in GPT-2 as in Table~\ref{tab:local-attn}. This suggests a sharp distinction between language vs.~non-language data in terms of induced geometry.

\paragraph{Failure of length generalization.} Our NanoGPT achieves above 99\% in-distribution test accuracy, yet fails at OOD generalization: it has less than 20\% accuracy on average on digits of length smaller than 5. We believe nonsmoothness is likely an intrinsic bottleneck for arithmetic tasks.

Our results hold for transformers with relative positional embeddings as well; see Section~\ref{sec:append-addition}.


\section{Incoherence enhances interpretability}\label{sec:incoh}

Near-orthogonality, or (mutual) incoherence, is known to be a critical property for sparse learning. Generally speaking, incoherence means that factors or features are nearly uncorrelated and decoupled. In the ideal scenario of orthogonality, factors can be decomposed into orthogonal non-intervening subspaces. Incoherence is closely related to \textit{restricted isometry} \citep{candes2005decoding}, \textit{irrepresentable conditions} \citep{zhao2006model}, etc.  

We observe the incoherence property between the positional basis and the context basis, as Table~\ref{tab:main} shows that $\max_{t,c} |\langle \frac{\pos_t}{\norm{\pos_t}}, \frac{\ctx_c}{\norm{\ctx_c}} \rangle|$ is typically small. The low incoherence in Table~\ref{tab:main} (random baseline is about 0.1) means that the two bases are nearly orthogonal to each other.

To decouple different effects,
in this section, we will focus on positional effects vs.~non-positional effects, thus working with $\cvec_{c,t}$ instead of $\ctx_c$.

\subsection{Decoupling positional effects in pretrained weights}\label{sec:weights-decomp}

We present evidence that the trained weight matrix $\mW:= \mW^q (\mW^k)^\top / \sqrt{d_{\head}}$ in self-attention has a clear and interpretable decomposition: heuristically, a low rank component that modulates the positional effects, and a diagonal component that strengthens or attenuates token effects.

More precisely, we identify a common \textit{low-rank plus noise} structure in the majority of attention heads in pretrained transformers.
\begin{equation}\label{eq:weight-matrix}
\mW = \underbrace{\mV \mL \mV^\top}_{\text{low rank}} + \underbrace{\mD}_{\text{diagonal}} + \; \textrm{Noise} \; .
\end{equation}
Here, the columns of $\mV \in \R^{d \times K}$ are the top-$K$ right singular vectors of positional basis matrix $\mP$, and $\mL \in \R^{K \times K}$, where $K$ is not large.

Figure~\ref{fig:attention-weight} shows empirical support to structural claim \eqref{eq:weight-matrix}. Given a pretrained weight $\mW$, we take $\mD = \diagg(\mW)$ and show entries in red. Then, we rotate the off-diagonal part of $\mW$ by the right singular vectors of $\mP$ and apply denoising, namely zeroing entries whose absolute values are smaller than a threshold. For many heads, the surviving large absolute values are concentrated in the top left ($K\approx 20$)---which suggests that indeed a significant component of $\mW$ is aligned with the positional basis.

This decomposition suggests a possible mechanism inside attention computation. Consider an ideal scenario where $\mD$ is a multiple of identity matrix, and each embedding has orthogonal decomposition $\vh = \vt + \vc$ with $\vt \in \spann(\mV)$ encoding positional information and $\vc \in \spann(\mV)^\bot$ encoding non-positional information. Then, for two embedding vectors $\vh, \vh'$ with $\vh=\vt + \vc$ and $\vh'=\vt' + \vc'$,
\begin{equation*}
\vh^\top \mW \vh' \approx \underbrace{\vt^\top (\mV \mL \mV^\top + \mD ) \vt'}_{\text{positional effect}} \;+ \underbrace{\vc^\top \mD \vc'}_{\text{context / token effect}} 
\end{equation*}
Positional effects are decoupled from context effects, since cross terms involving $\vt, \vc'$ or $\vt', \vc$ vanish. This heuristics may allow us to examine how information is processed in attention heads, which will be explored in future work. 

\subsection{Beyond attention visualization}\label{sec:beyond-attn}

Attention visualization is often used as a diagnosis of self-attention in transformers. We show that more structure can be revealed by decomposing the QK matrix 
using our decomposition of embeddings.

\begin{figure*}[t!]
\centering
\includegraphics[width=0.95\textwidth]{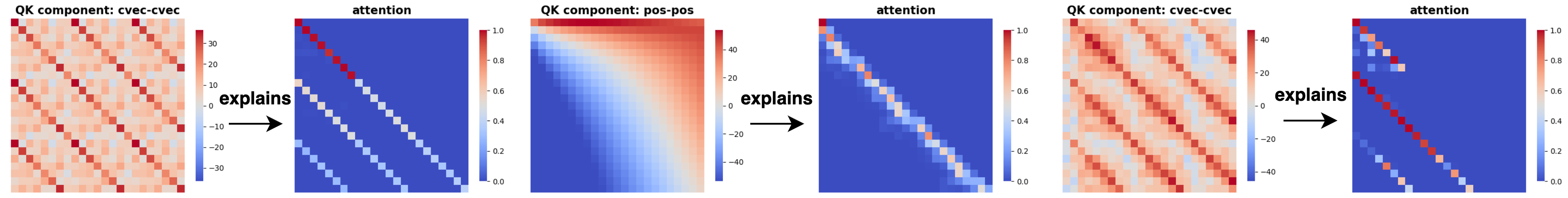}
\caption{\textbf{QK constituent plots enhance attention visualization}: $3$ representative attention patterns in induction heads can be explained by their respective dominant QK constituents. \textbf{Left pair:} $\cvec$-$\cvec$ constituent drives attention to identical tokens. \textbf{Middle pair:} $\pos$-$\pos$ constituent determines attention to neighboring tokens. \textbf{Right pair:} $\cvec$-$\cvec$ shows attentions to shifted tokens.
}\label{fig:induction}
\end{figure*}

We start with decomposing the QK matrix. Assuming $\vmu = \vzero$ (ignoring global mean effect for convenience), then for embedding vectors $\vh, \vh' \in \R^d$ we have
\begin{align}
    & \vh^\top \mW^q (\mW^k)^\top \vh = \pos^\top \mW^q (\mW^k)^\top \pos \notag\\
    & + \pos^\top \mW^q (\mW^k)^\top \cvec 
    + \cvec^\top \mW^q (\mW^k)^\top \pos \notag\\
    &+ \cvec^\top \mW^q (\mW^k)^\top \cvec\,.\label{QKdecomp}
\end{align}
Each of the four components shows how much an attention head captures information from cross-pairs $\pos/\cvec$---$\pos/\cvec$ of an embedding. We propose to visualize each QK component separately, which tells if an attention head captures positional information or context/token information from embeddings.

\paragraph{Case study: induction heads.} \citet{elhage2021mathematical} identified components in transformers that complete a sequence pattern based on observed past tokens, namely, predicting the next token $[B]$ based on observed sequence $[A], [B], \ldots, [A]$. This ability of copying previous tokens in known to be caused by \textit{induction heads}.

There are three representative attention patterns as shown in Figure~\ref{fig:induction}: (i) attention to identical tokens, namely attention weights concentrate on tokens identical to the present token, (ii) attention to neighboring tokens, (iii) attention to tokens to be copied. See also \citet{elhage2021mathematical}.


Attention visualization alone does not reveal why certain heads emerge. In contrast, our QK decomposition reveals which QK constituent is dominant and responsible for attention patterns. For example, in Figure~\ref{fig:induction},
\begin{itemize}
    \item attention to neighboring tokens (middle plots) is predominately determined by the $\pos$-$\pos$ constituent;
    \item attention to identical token (left) or shifted token (right) is determined by the $\cvec$-$\cvec$ constituent. This implies that induction heads are not based on memorizing relative positions, but on matching token information.
\end{itemize}
More details are in Section~\ref{sec:append-QK}.

\subsection{Theoretical insight from kernel factorization}

Why does incoherence emerge from pretrained transformers? While it is well known in sparse coding and compressed sensing that \textit{handcrafted} incoherent basis facilitates recovery of sparse signals \citep{donoho1989uncertainty, donoho2003optimally, donoho2006compressed, candes2006robust}, it is surprising that incoherence arises from \textit{automatic feature learning}.

Here we focus on the self-attention mechanism of transformers. By adopting the kernel perspective, we present a preliminary theory for the 
following heuristics:
\begin{center}
\textit{
Incoherence enables a kernel to factorize into smaller components, each operating independently.}
\end{center}
Given query/key matrices $\mW^q, \mW^k \in \R^{d \times d_{\head}}$, we define the (asymmetric) kernel by (recalling $\mW = \mW^q (\mW^k)^\top / \sqrt{d_{\head}}$)
\begin{equation*}
    K_{\mW}(\vz, \vz') := \exp \left( \vz^\top \mW \vz' \right) = \exp \left( \frac{\langle \mW^q \vz, \mW^k \vz' \rangle}{\sqrt{d_{\head}}} \right).
\end{equation*}
Using $K_{\mW}$, the attention can be expressed as kernel smoothing: for embeddings  $(\vx_t)_{t\le T} \subset \R^d$,
\begin{equation}\label{def:Attn}
    \AttnHead(\vx_t; K_{\mW}) = \sum_{k \le t} \frac{K_{\mW}(\vx_{k}, \vx_t)}{\sum_{k' \le t} K_{\mW}(\vx_{k'}, \vx_t)} v(\vx_k)
\end{equation}
where $v: \R^d \to \R$ is a generic value function. This kernel perspective is explored in \citet{tsai2019transformer}, where it is argued that the efficacy of self-attention largely depends on the form of the kernel.

Suppose that there are two overcomplete bases $\gB_1^0, \gB_2^0 \subset \R^d$. For simplicity, assume that $\norm{\vu}_2 \le 1$ if $\vu \in \gB_1^0$ or $\gB_2^0$. 
The mutual incoherence is $\incoh := \max\big\{ | \langle \vc, \vt \rangle: \vc\in \gB_1^0, \vt \in \gB_2^0 \big\}$. Consider the (extended) overcomplete basis $\gB_\alpha := \{\lambda \vu: \vu \in \gB_\alpha^0, \lambda \in [-1,1]\}$ where $\alpha \in \{1,2\}$. Given query/key vectors $\vx^q, \vx^k \in \R^d$, suppose that we can decompose them according to the two bases.
\begin{equation}\label{eq:decompbasis}
    \vx^q = \vc^q + \vt^q, \quad \vx^k = \vc^k + \vt^k, ~ \text{for}~\vc^q,\vc^k \in \gB_1;~ \vt^q,\vt^k \in \gB_2.
\end{equation}

Generically, we can decompose $K_{\mW}(\vx^q, \vx^k)$ into
\begin{equation*}
    K_{\mW}(\vc^q, \vc^k) K_{\mW}(\vc^q, \vt^k) \cdot K_{\mW}(\vt^q, \vc^k) K_{\mW}(\vt^q, \vt^k)\, .
\end{equation*}


Each component measures cross similarity of pairs between $\vc^q,\vt^q$ and $\vc^k,\vt^k$, which then translates into a weight for the attention. Unfortunately, this general decomposition requires the individual kernels to share the same weight $\mW$, which hinders capturing cross interactions flexibly. 

It turns out that if the weight matrix is \textit{sparsely represented} by the bases, then kernel flexibility can be achieved. To be precise, we will say that $\mW \in \R^{d \times d}$ is $s$-sparsely represented by bases $\gB, \gB'$ if there exist $(a_k)_{k\le s} \subset [-1,1]$, $(\vu_k)_{k\le s} \subset \gB$, $(\vv_k)_{k\le s} \subset \gB'$ such that
\begin{equation}\label{eq:mW}
    \mW = \sum_{k\le s} a_k \vu_k \vv_k^\top.
\end{equation}

\begin{thm}\label{thm:incoh}
    Let $\mW_{11}, \mW_{12}, \mW_{21}, \mW_{22} \in \R^{d \times d}$ be any matrices with the following properties: for $\alpha, \beta \in \{1,2\}$, $\mW_{\alpha \beta} \in \R^{d \times d}$ is $O(1)$-sparsely represented by bases $\gB_\alpha, \gB_\beta$. Then for all $\vx^q,\vx^k \in \R^d$ satisfying \eqref{eq:decompbasis}, $\mW = \mW_{11} + \mW_{12} + \mW_{21} + \mW_{22}$ satisfies
    \begin{align}
        K_{\mW}(\vx^q, \vx^k) = &\big(1 + O(\incoh) \big) \cdot K_{\mW_{11}}(\vc^q, \vc^k) K_{\mW_{12}}(\vc^q, \vt^k) \notag \\ & \cdot 
 K_{\mW_{21}}(\vt^q, \vc^k) K_{\mW_{22}}(\vt^q, \vt^k) \label{eq:incoh}
    \end{align}
    Moreover, \eqref{eq:incoh} holds with probability at least $1-O((|\gB_1^0|\cdot|\gB_2^0|)\exp(-\incoh^2\cdot d)$ if each $\mW_{\alpha \beta}$ is replaced by $\mW_{\alpha \beta} + \frac{\rmZ_{\alpha \beta}}{\sqrt{d}}$ where $(\rmZ_{\alpha \beta})_{kk'}$ is an independent subgaussian\footnote{We say that a random variable $\rxi$ is subgaussian if $\E[\rxi]=0$ and $\E[\exp(\lambda \rxi)] \le \exp(\lambda^2/2)$ for all $\lambda \in \R $.} random variable.
\end{thm}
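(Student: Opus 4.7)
The plan is to linearize the kernel identity: prove an additive identity for the exponent $\vx^{q\top} \mW \vx^k$ up to $O(\incoh)$ error, then exponentiate. Expanding $\mW = \mW_{11}+\mW_{12}+\mW_{21}+\mW_{22}$ together with the basis decompositions $\vx^q = \vc^q + \vt^q$ and $\vx^k = \vc^k + \vt^k$ yields a sum of $16$ bilinear forms
\begin{equation*}
\vx^{q\top}\mW\vx^k = \sum_{\alpha',\beta' \in \{c,t\}} \sum_{\alpha, \beta \in \{1,2\}} \vx^{q\top}_{\alpha'} \mW_{\alpha\beta} \vx^{k}_{\beta'},
\end{equation*}
where $\vx^q_c = \vc^q$, $\vx^q_t = \vt^q$, and analogously for $\vx^k$. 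The four \emph{aligned} terms (those with $\alpha'$ matching $\alpha$ under $c\leftrightarrow 1, t\leftrightarrow 2$, and likewise for $\beta', \beta$) sum to the logarithm of the product $K_{\mW_{11}}(\vc^q,\vc^k) K_{\mW_{12}}(\vc^q,\vt^k) K_{\mW_{21}}(\vt^q,\vc^k) K_{\mW_{22}}(\vt^q,\vt^k)$ on the right-hand side of \eqref{eq:incoh}. It remains to show the $12$ \emph{cross} terms each contribute at most $O(\incoh)$.

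\paragraph{Bounding the cross terms deterministically.} For each cross term, use the sparse representation $\mW_{\alpha\beta} = \sum_{k\le s} a_k \vu_k \vv_k^\top$ with $\vu_k \in \gB_\alpha^0$, $\vv_k \in \gB_\beta^0$, and $s = O(1)$, so
\begin{equation*}
\vx^{q\top}_{\alpha'} \mW_{\alpha\beta} \vx^{k}_{\beta'} = \sum_{k\le s} a_k \langle \vx^{q}_{\alpha'}, \vu_k\rangle \langle \vv_k, \vx^{k}_{\beta'}\rangle.
\end{equation*}
Whenever $(\alpha', \beta') \ne (\alpha, \beta)$, at least one of the two inner products pairs vectors from different bases (say $\vx^{q}_{\alpha'} \in \gB_2$ with $\vu_k \in \gB_1^0$), which is therefore bounded by $\incoh$; the other is bounded by $1$ from the basis normalization. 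Summing $s=O(1)$ such terms gives a cross-term bound of $O(\incoh)$. Combining the $12$ cross terms yields $\vx^{q\top}\mW\vx^k = (\text{sum of four aligned terms}) + O(\incoh)$. Exponentiating and using $\exp(O(\incoh)) = 1 + O(\incoh)$ (a statement independent of the magnitude of the aligned terms) produces the factorization \eqref{eq:incoh} in the deterministic case.

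\paragraph{The noisy case.} Replace each $\mW_{\alpha\beta}$ by $\mW_{\alpha\beta}+\rmZ_{\alpha\beta}/\sqrt{d}$. The noise enters both the aligned and the cross terms; the aligned noise contributions are absorbed exactly into the modified factors $K_{\mW_{\alpha\beta}+\rmZ_{\alpha\beta}/\sqrt{d}}$ on the right-hand side and require no bound. For each of the $12$ cross noise contributions, the core estimate is that for fixed $\vu \in \gB_{\alpha'}^0$ and $\vv \in \gB_{\beta'}^0$ the bilinear form $\vu^\top \rmZ_{\alpha\beta} \vv = \sum_{ij} u_i v_j (\rmZ_{\alpha\beta})_{ij}$ is subgaussian with proxy variance $\norm{\vu}^2 \norm{\vv}^2 \le 1$, so $|\vu^\top \rmZ_{\alpha\beta} \vv|/\sqrt{d} = O(\incoh)$ with probability at least $1 - \exp(-\Omega(\incoh^2 d))$. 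Bilinearity and the scalings $\lambda \in [-1,1]$ defining $\gB_\alpha$ from $\gB_\alpha^0$ extend this to all admissible $\vx^{q}_{\alpha'}, \vx^k_{\beta'}$. A union bound over the $O(|\gB_1^0|\cdot|\gB_2^0|)$ basis pairs and the $O(1)$ noise matrices gives the stated failure probability.

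\paragraph{Main obstacle.} The paragraph 2 bookkeeping is routine; the real care is needed in paragraph 3. Two points require attention. First, I must ensure that the noise contribution to an aligned term is absorbed into exactly the factor $K_{\mW_{\alpha\beta}+\rmZ_{\alpha\beta}/\sqrt{d}}$ the theorem writes on the right-hand side, so that only cross-term noise is being controlled probabilistically. Second, the constant hidden in $O(\incoh)$ should depend only on the sparsity constant $s$ and the number of components (both $O(1)$), not on $d$, $|\gB_\alpha^0|$, or the magnitudes of the aligned bilinear forms; otherwise the factor $1+O(\incoh)$ degrades. Verifying that the subgaussian concentration is applied at a resolution $O(\incoh)$ (rather than $O(\incoh / \sqrt{\log |\gB|})$) so that the final failure probability is exactly $(|\gB_1^0| \cdot |\gB_2^0|) \exp(-\incoh^2 d)$ is the most delicate accounting step.
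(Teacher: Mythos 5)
Your proposal is correct and follows essentially the same route as the paper's proof: the paper first factors $K_{\mW}$ multiplicatively into the sixteen kernels $K_{\mW_{\alpha\beta}}(\cdot,\cdot)$ over all query/key component pairs, keeps the four aligned factors, and shows each of the twelve cross factors equals $1+O(\incoh)$ by expanding $\mW_{\alpha\beta}$ in its sparse representation and using $|\langle \vu,\vv\rangle|\le\incoh$ for vectors from different bases — which is exactly your additive argument in the exponent. The noisy case is likewise handled identically, via subgaussianity of $\vu^\top \rmZ_{\alpha\beta}\vv$ at resolution $O(\incoh)$ and a union bound over the $|\gB_1^0|\cdot|\gB_2^0|$ basis pairs.
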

The factorization \eqref{eq:incoh} says that each kernel component has a separate weight matrix, and all components contribute multiplicatively to $K_\mW$. The ``moreover'' part generalizes the sparse representation notion by allowing additive noise, which matches the empirical structure in \eqref{eq:weight-matrix}. The additive construction of $\mW$ is connected to \textit{task arithmetic} \citep{ilharco2022editing, ortiz2023task} recently studied.

\begin{rmk}
If we suppose $\incoh \asymp d^{-\gamma}$ with $1/2 > \gamma > 0$, then the high probability statement is nontrivial if $|\gB_1^0| \cdot |\gB_2^0| = o(\exp(d^{1-2\gamma}))$. This dictionary size limit is generally reasonable.
\end{rmk}

\section{Related work}\label{sec:related}

Analyses of transformers have attracted research interest since  \citet{vaswani2017attention}. Many studies on GPT-2 \citep{radford2019language} and BERT \citep{devlin2018bert} show that last-layer contextualized embeddings capture linguistic structure and exhibit excellent downstream performance \citep{hewitt2019structural, chi2020finding, thompson2020topic}. Fewer papers focus on the geometry or intermediate-layer embeddings: in \citet{ethayarajh2019contextual}, it is found that later-layer embeddings are increasingly anisotropic and context-specific; \citet{cai2020isotropy,reif2019visualizing,hernandez2021low,gao2019representation} observed interesting geometric structures and artifacts without thorough analysis; \citet{yeh2023attentionviz} provide visualization tools for embeddings. Very recent papers provide empirical/theoretical evidence about either low-rank or diagonal structure in attention weight matrices \citep{boix2023transformers,trockman2023mimetic}. Our decomposition 
unifies scattered empirical phenomena, reveals consistent geometry and explains observed artifacts (anisotropic, spiral shape, etc.).

Many variants of positional embedding are proposed \citep{shaw2018self, dai2019transformer, su2021roformer, scao2022bloom, press2021train} since \citet{vaswani2017attention}. Since GPT-4, many papers focus on length generalization for arithmetic tasks \citep{kazemnejad2023impact, lee2023teaching}. 
Prior analyses on positional embeddings focus only on static (0-th layer) embeddings for selected transformers \citep{wang2020onposition, ke2020rethinking, wang2020position, tsai2019transformer,yamamoto2023absolute},
whereas we provide a more detailed picture.

Prior work on LSTMs finds decomposition-based methods enhance interpretability \citep{murdoch2018beyond}. Understanding the inner workings of transformers is usually done through attention visualization \citep{clark2019does, wang2022interpretability}. The emergence of induction heads \citep{elhage2021mathematical, inductionhead22} 
is supported by attention visualization,
which is further reinforced by our analysis.

\section{Limitations}\label{sec:limit}

In this paper, we 
mostly focus
on pretrained transformers due to limited computational resources. It would be interesting to investigate the impact of input/prompt formats on the geometry of embeddings over the course of training, especially for different linguistic tasks and arithmetic tasks.

Also, we mostly focus on the mean vectors $\pos_t$ and $\ctx_c$ but not study $\resid_{c,t}$ thoroughly. We find that the residual component is not negligible (e.g., containing token-specific information). It would be interesting to study the higher-order interaction in $\resid_{c,t}$ and propose a nonlinear decomposition of embeddings, which is left to future work.

\section{Acknowledgement}\label{sec:ack}
We thank Junjie Hu, Tim Ossowski, Harmon Bhasin, Wei Wang for helpful discussions. 

Support for this research was provided by the Office of the Vice Chancellor for Research and Graduate Education at the University of Wisconsin--Madison with funding from the Wisconsin Alumni Research Foundation.

\newpage

\bibliography{refs}
\bibliographystyle{icml2024}

\newpage
\appendix
\addcontentsline{toc}{section}{Appendix} 
\part{Appendix} 
\parttoc 

\section{Models, datasets, and implementations}\label{sec:append-setup}

We present the details of our experiments and measurements.

\subsection{Pretrained models}

We downloaded and used pretrained models from Huggingface. In addition, we trained a  \href{https://github.com/karpathy/nanoGPT} {NanoGPT} on a \href{https://raw.githubusercontent.com/karpathy/char-rnn/master/data/tinyshakespeare/input.txt}{Shakespeare} dataset and on addition tasks.

\begin{itemize}
    \item GPT-2 \citep{radford2019language}: 12-layer, 12-head, 768-dim, 124M parameters, autoregressive, absolute positional encoding at $0$th-layer, pretrained on OpenWebText;
    \item BERT \citep{devlin2018bert}: 12-layer, 12-head, 768-dim, 124M parameters, masked prediction, absolute positional encoding at $0$th-layer, pretrained on BooksCorpus and English Wikipedia;
    \item BLOOM \citep{scao2022bloom}: 24-layer, 16-head, 1024-dim, 560M parameters, ALiBI positional encodings \citep{press2021train} at each layer, pretrained on 45 natural languages and 12 programming languages;
    \item Llama2-7B \citep{touvron2023llama}: 32-layer, 32-head, 4096-dim, 7B parameters, autoregressive, Rotary positional embedding \citep{su2021roformer} at every layer, pretrained on a variety of data.
\end{itemize}

Note that (i) the training objective for pretraining BERT is different from the other models, and (ii) Llama 2 uses rotary positional encoding for each layer and BLOOM uses ALiBI positional encoding---which is different from absolute positional encoding that is added at the $0$-th layer \citep{vaswani2017attention}.

\subsection{Training small transformers}

We train a few smaller transformers in this paper. Models are based on the GPT-2 architecture with adjusted parameters, and we adopt the implementation of the \href{https://github.com/karpathy/nanoGPT}{GitHub Project} by Andrej Karpathy. The hardware we use is mainly RTX3090ti. All the following experiments take under 1 hour to train.

\begin{itemize}
    \item \textbf{NanoGPT in Table~\ref{tab:main} and~\ref{tab:ood}}: The model is a vanilla Transformer with 6 layers, 6 heads, 384 dimensional embeddings, residual/embedding/attention dropout set to 0.1, weight decay set to 0.1, and a context window of 128. The dataset is Shakespeare with character-level tokenization. We train 5000 iterations using the AdamW optimizer, with a  batch size of 64 and a cosine scheduler (100 step warmup) up to a learning rate of 1e-3;
    
    \item \textbf{Addition}: Similarly, we use a vanilla Transformer with 4 layers, 4 heads, 256 dimensional embeddings, and weight decay set to 0.1. The context window is set as the length of the longest sequence, i.e., 33 for the 10-digits addition task here. We train 10,000 iterations using the AdamW optimizer, with a  batch size 64 and a linear scheduler up to a learning rate of 1e-4.
\end{itemize}


\subsection{Removing artifacts}

There are two likely artifacts in the measurements and visualization that we removed in the paper. 
\begin{enumerate}
    \item First token in a sequence. We find that a large proportion of attention is focused on the first token, which usually distorts visualization significantly. It has been known that the first token functions as a ``null token'', which is removed in analysis \citep{vig2019analyzing}. We also adopt removing the first token in our measurements and visualization.
    \item Final-layer embeddings. We find that the embeddings of the final layer typically do not have a significant positional basis component. It is likely that positional information is no longer needed since last-layer embeddings are directly connected to the loss function.
\end{enumerate}

\subsection{Positional basis calculation}

We calculate positional bases based on sampled sequences of length $T$ from a subset of the corpus, which includes OpenWebText, WikiText, and GitHub. The implementation and weights of the pretrained models are obtained from HuggingFace.

For the curated corpus subset, we utilize the streaming version of the HuggingFace datasets and extract the first 10K samples from the train split. Then we tokenize the dataset using the same tokenizer employed by the pretrained model. The size of the final datasets vary across tasks and datasets, and we ensure that there are at least 1M tokens in each case to prevent the occurrence of overlapping sequences.

We set the context window $T$ = 512 for BERT, BLOOM, and GPT-2, as this maintains the maximum context window utilized during pretraining. For Llama-2, we set $T$ = 512 instead of a longer maximum sequence length due to computational resource limitations.
\section{Additional empirical results for Section~\ref{sec:geometry}}

\subsection{Comparison with original sinusoidal positional embedding}\label{sec:append-compare-PE}

\begin{figure}[h]
\centering
\includegraphics[width=0.95\textwidth]{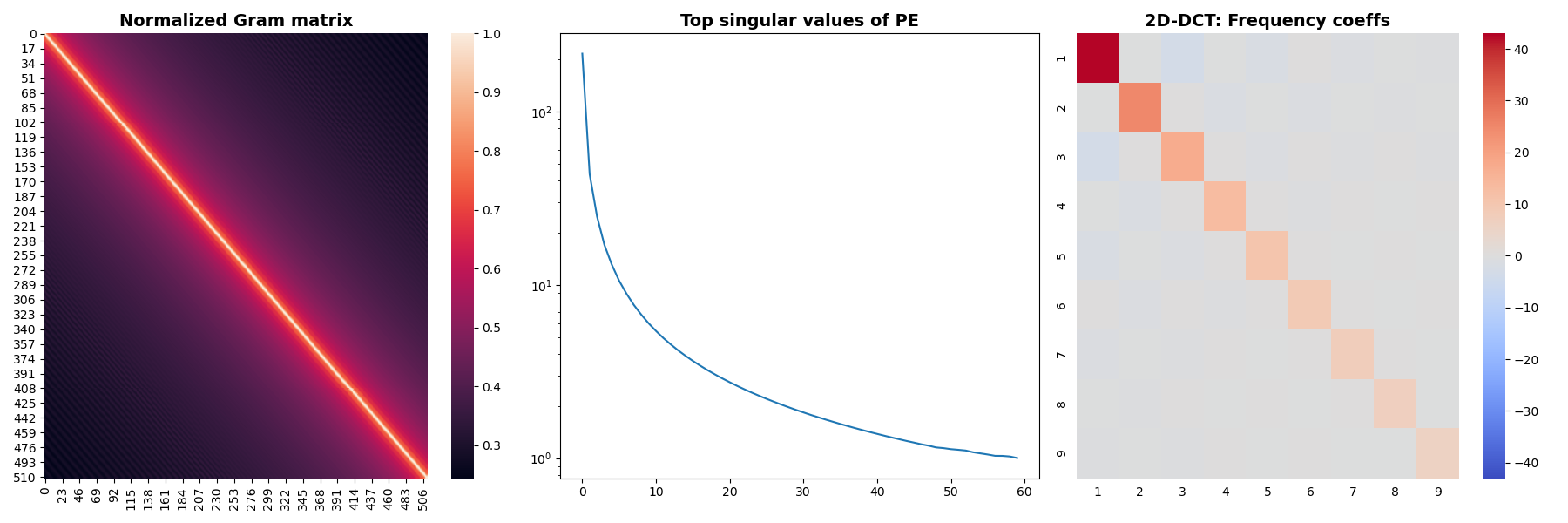}
\caption{\textbf{Original sinusoidal positional embedding} proposed in \citet{vaswani2017attention} is fixed. Compare the spectral and Fourier plots with Figure~\ref{fig:svd} and Figure~\ref{fig:Gram}.}
\label{fig:PE-compare}
\end{figure}

The original sinusoidal positional embedding (PE) proposed in \citet{vaswani2017attention} is fixed:
\begin{equation*}
    [\pos_t]_i = \begin{cases}
        \sin \left( \frac{t}{10000^{2i/d}} \right),  & i = 2k \\ \cos \left( \frac{t}{10000^{2i/d}} \right).  & i = 2k+1
    \end{cases}
\end{equation*}
Although this PE and learned positional basis both have sinusoidal patterns, we observe differences by comparing Figure~\ref{fig:PE-compare} and Figure~\ref{fig:svd} and~\ref{fig:Gram}:
\begin{enumerate}
    \item The Gram matrix of learned positional basis is visually smooth, whereas the original PE has large spiky values in the diagonal.
    \item The learned positional basis has a more pronounced low-rank structure, as there is a sharper drop of top singular values.
    \item The learned positional basis has more low-frequency 2D Fourier coefficients.
\end{enumerate}

\subsection{Do residual connections explain consistent geometry?}\label{sec:append-not-residual}

\begin{figure}[h]
\centering
\includegraphics[width=0.7\textwidth]{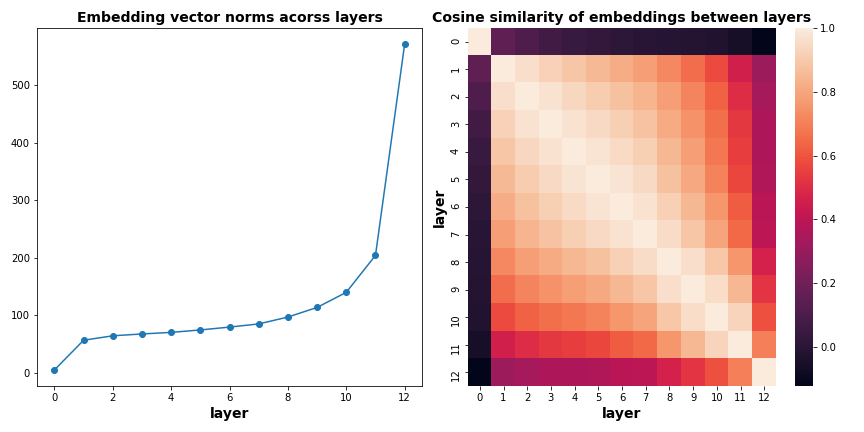}
\caption{\textbf{Average embedding norm and cosine similarity} from GPT-2 based on a random sampled sequence.}
\label{fig:embedding_norms}
\end{figure}

The residual connections in transformers provide an identity mapping across layers, which may seem to be a plausible reason for the consistency of observed geometry in Section~\ref{sec:geometry}.

We show evidence that this is not the case. We use pretrained GPT-2 to generate embeddings by feeding the transformer with random tokens of length $512$ uniformly sampled from the vocabulary. Then, we calculate the average norm of embeddings in each layer, and the cosine similarity of embeddings between every two layers.

In Figure~\ref{fig:embedding_norms}, we find
\begin{enumerate}
    \item The embedding norms increase significantly across layers. In particular, the norm ratio between Layer 1 and Layer 0 is 10.92, and between final layer and Layer 0 is 109.8.
    \item The cosine similarity between the $0$-th layer and any other layer is close to $0$, which indicates that embeddings experience dramatic changes from Layer 0 to Layer 1.
\end{enumerate}
These observations imply that residual connections cannot explain the consistent geometry in Section~\ref{sec:geometry}.

\section{Additional empirical results for Section~\ref{sec:pos}}
We provide visualization and analysis for models other than GPT-2. 


\subsection{PCA visualization}

\begin{figure}[p]
\centering
\includegraphics[width=0.98\textwidth]{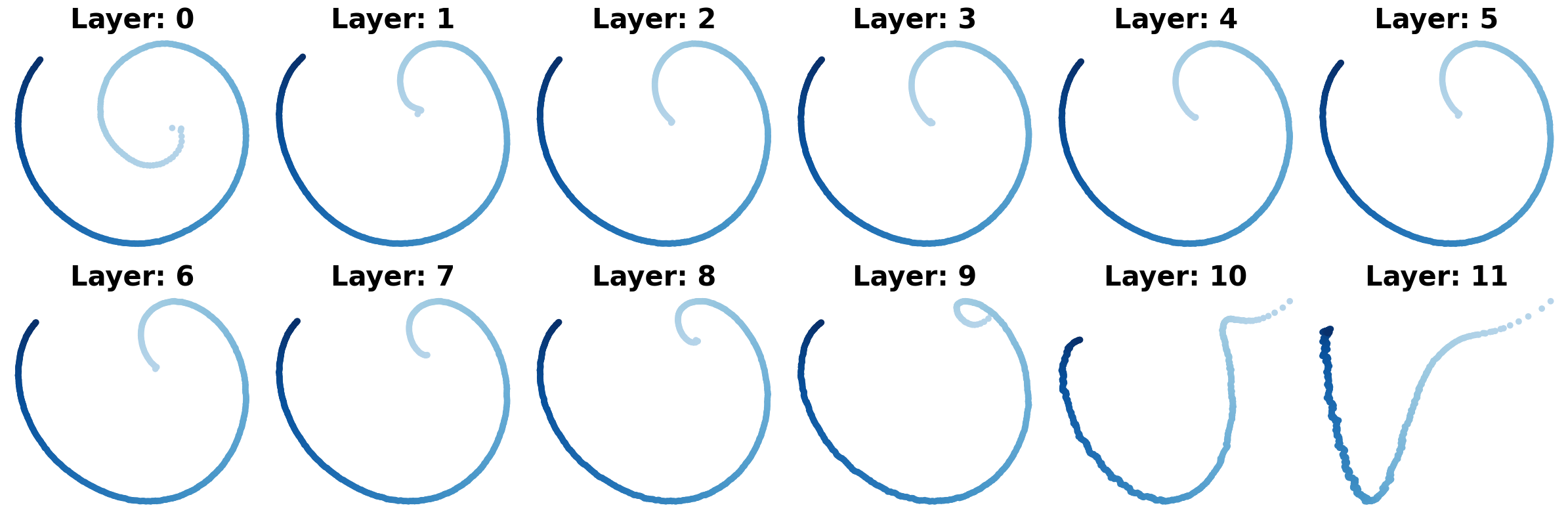}
\caption{Top-2 principal components of positional basis; GitHub, GPT2}
\label{fig:pca-github-gpt2}
\end{figure}

\begin{figure}[p]
\centering
\includegraphics[width=0.98\textwidth]{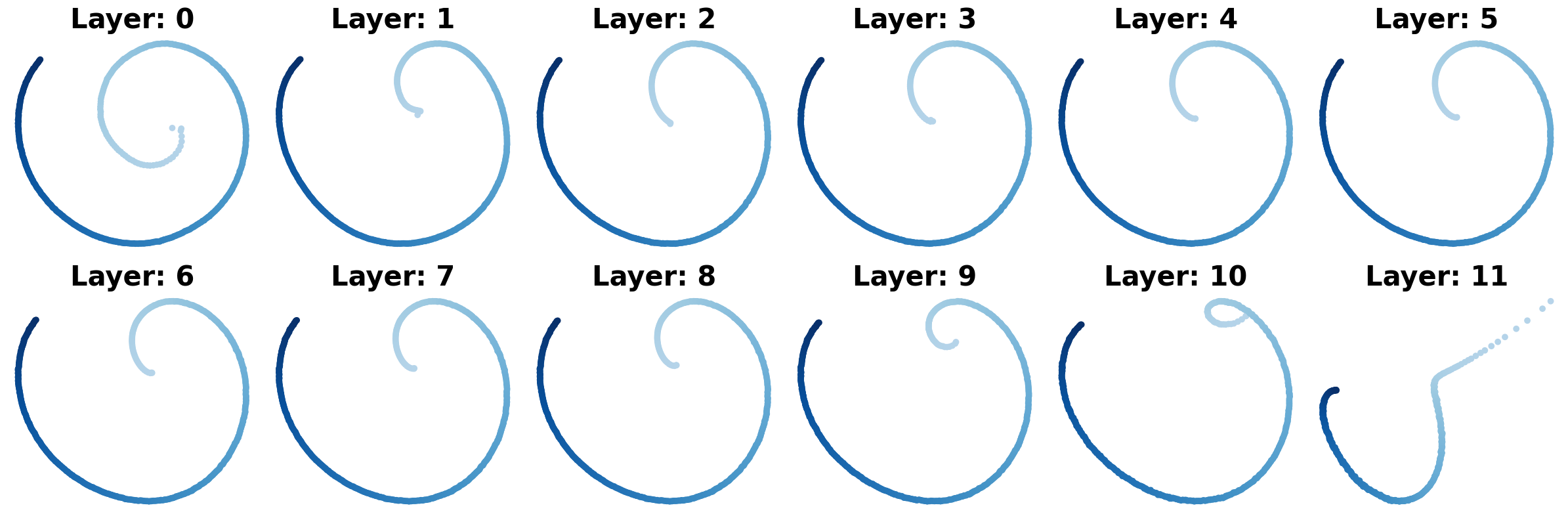}
\caption{Top-2 principal components of positional basis; WikiText, GPT2}
\label{fig:pca-wikitext-gpt2}
\end{figure}

\begin{figure}[p]
\centering
\includegraphics[width=0.98\textwidth]{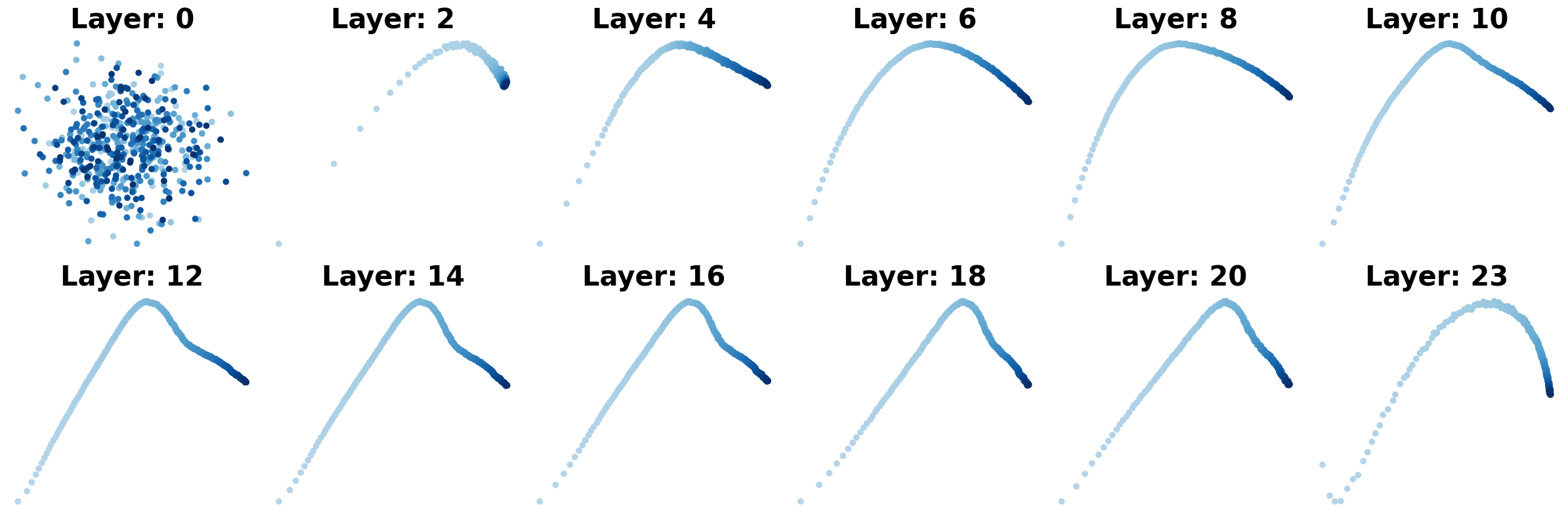}
\caption{Top-2 principal components of positional basis; OpenWebText, BLOOM}
\label{fig:pca-openwebtext-bloom}
\end{figure}

\begin{figure}[p]
\centering
\includegraphics[width=0.98\textwidth]{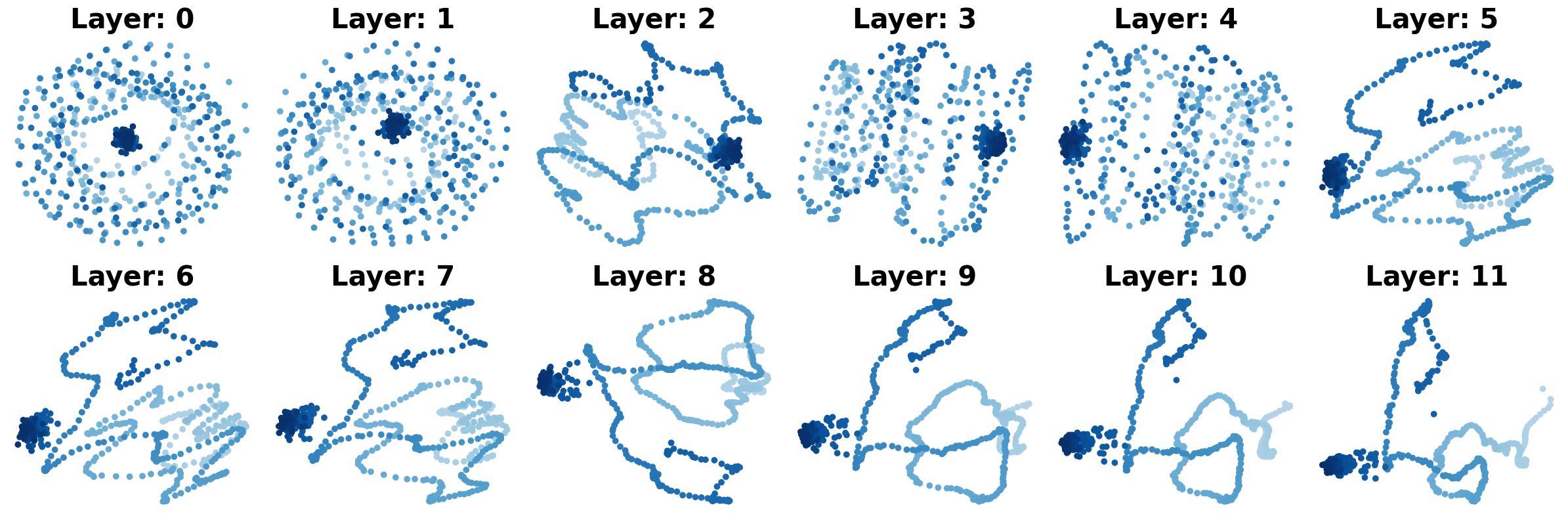}
\caption{Top-2 principal components of positional basis; OpenWebText, BERT}
\label{fig:pca-openwebtext-bert}
\end{figure}

\begin{figure}[p]
\centering
\includegraphics[width=0.98\textwidth]{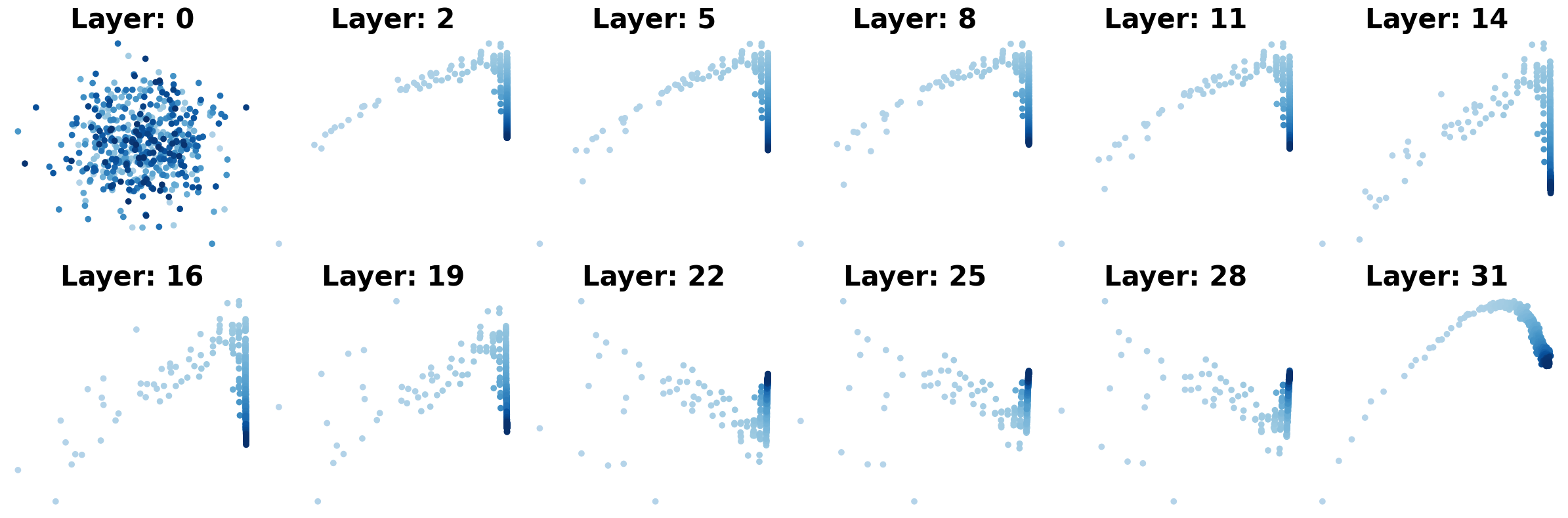}
\caption{Top-2 principal components of positional basis; OpenWebText, Llama2}
\label{fig:pca-openwebtext-llama2}
\end{figure}

\begin{figure}[p]
\centering
\includegraphics[width=0.98\textwidth]{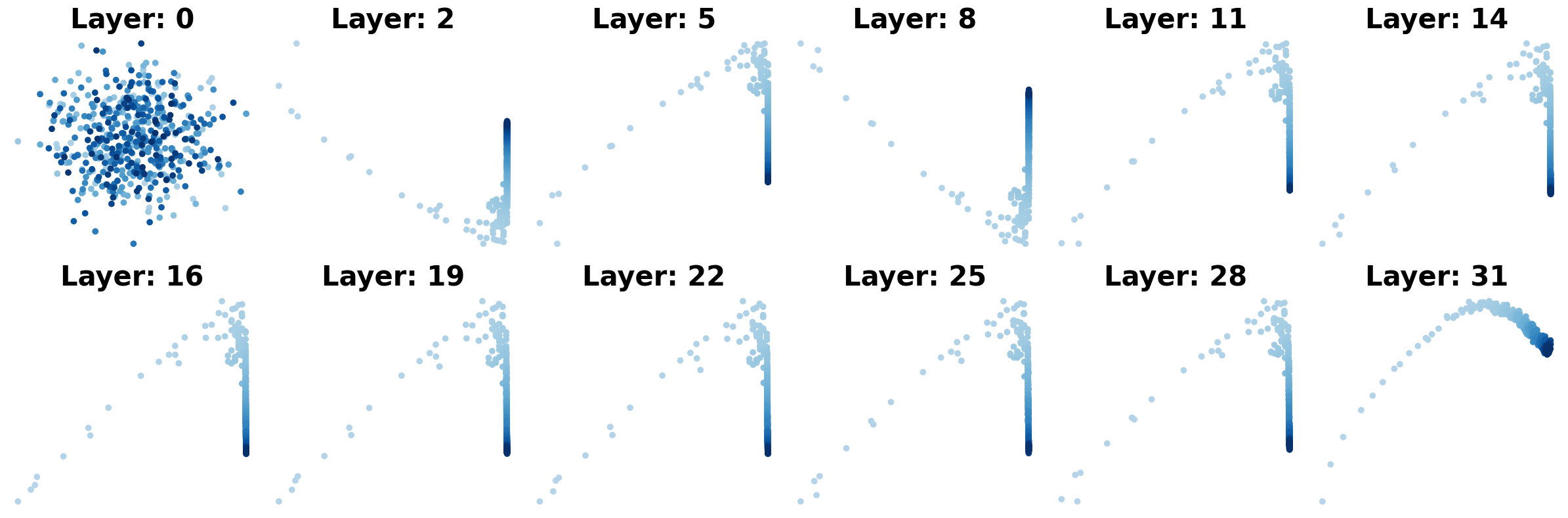}
\caption{Top-2 principal components of positional basis; GitHub, Llama2}
\label{fig:pca-github-llama2}
\end{figure}

See Figure \ref{fig:pca-github-gpt2}---Figure \ref{fig:pca-github-llama2}. Note: BERT displays a more complex circular shape, likely because its training objective is different from the others. 


\subsection{Low rank measurements}\label{sec:append-lowrank}
We provide rank estimates for all layers.

\paragraph{Rank estimate.} We report the rank estimate for all pretrained models and datasets in Table \ref{tab:screenot}. Additionally, we include another rank estimate---the Stable rank estimate \citep{rudelson2007sampling} in Table \ref{tab:stable-rank}.
\begin{table}
\caption{ScreeNOT Rank Estimate for models, datasets and at each layer.}
\label{tab:screenot}
\resizebox{\columnwidth}{!}{%
\begin{tabular}{@{}llrrrrrrrrrrrrr@{}}
\toprule
 &
   &
  \textbf{Layer 0} &
  \textbf{Layer 1} &
  \textbf{Layer 2} &
  \textbf{Layer 3} &
  \textbf{Layer 4} &
  \textbf{Layer 5} &
  \textbf{Layer 6} &
  \textbf{Layer 7} &
  \textbf{Layer 8} &
  \textbf{Layer 9} &
  \textbf{Layer 10} &
  \textbf{Layer 11} &
  \textbf{Layer 12} \\ \midrule
\multirow{3}{*}{\textbf{BERT}}   & \textbf{GitHub}      & 15 & 16 & 16 & 16 & 14 & 11 & 11 & 9  & 10 & 10 & 11 & 11 & 12 \\
                                 & \textbf{OpenWebText} & 15 & 16 & 18 & 16 & 11 & 11 & 9  & 9  & 11 & 11 & 11 & 11 & 13 \\
                                 & \textbf{WikiText}    & 15 & 16 & 18 & 16 & 12 & 11 & 9  & 9  & 11 & 11 & 11 & 12 & 12 \\
\multirow{3}{*}{\textbf{BLOOM}}  & \textbf{GitHub}      & 8  & 9  & 9  & 8  & 9  & 10 & 10 & 11 & 10 & 10 & 10 & 10 & 10 \\
                                 & \textbf{OpenWebText} & 6  & 10 & 10 & 11 & 11 & 10 & 11 & 11 & 11 & 11 & 10 & 10 & 11 \\
                                 & \textbf{WikiText}    & 6  & 8  & 9  & 10 & 10 & 11 & 11 & 11 & 11 & 11 & 11 & 10 & 11 \\
\multirow{3}{*}{\textbf{GPT2}}   & \textbf{GitHub}      & 15 & 14 & 13 & 12 & 12 & 11 & 11 & 10 & 10 & 10 & 11 & 11 & 10 \\
                                 & \textbf{OpenWebText} & 15 & 13 & 14 & 12 & 13 & 11 & 10 & 10 & 10 & 10 & 9  & 9  & 12 \\
                                 & \textbf{WikiText}    & 15 & 14 & 14 & 12 & 11 & 11 & 11 & 11 & 11 & 11 & 9  & 10 & 12 \\
\multirow{3}{*}{\textbf{Llama2}} & \textbf{GitHub}      & 6  & 10 & 9  & 8  & 10 & 8  & 8  & 9  & 9  & 9  & 9  & 8  & 10 \\
                                 & \textbf{OpenWebText} & 7  & 10 & 10 & 11 & 11 & 10 & 9  & 10 & 9  & 8  & 9  & 8  & 10 \\
                                 & \textbf{WikiText}    & 8  & 10 & 10 & 10 & 9  & 8  & 8  & 8  & 8  & 8  & 8  & 8  & 10 \\ \bottomrule
\end{tabular}%
}
\end{table}
\begin{table}
\centering
\caption{Stable rank for models, datasets and at each layer.}
\label{tab:stable-rank}
\resizebox{\columnwidth}{!}{%
\begin{tabular}{@{}llrrrrrrrrrrrrr@{}}
\toprule
 &
   &
  \textbf{Layer 0} &
  \textbf{Layer 1} &
  \textbf{Layer 2} &
  \textbf{Layer 3} &
  \textbf{Layer 4} &
  \textbf{Layer 5} &
  \textbf{Layer 6} &
  \textbf{Layer 7} &
  \textbf{Layer 8} &
  \textbf{Layer 9} &
  \textbf{Layer 10} &
  \textbf{Layer 11} &
  \textbf{Layer 12} \\ \midrule
\multirow{3}{*}{\textbf{BERT}}   & \textbf{GitHub}      & 9.19  & 7.79 & 5.26 & 4.73 & 4.34 & 3.84 & 3.48 & 3.20 & 2.70 & 2.45 & 2.04 & 1.84 & 1.91 \\
                                 & \textbf{OpenWebText} & 9.19  & 7.63 & 5.25 & 4.73 & 4.10 & 3.53 & 3.16 & 2.84 & 2.46 & 2.30 & 2.18 & 2.22 & 2.15 \\
                                 & \textbf{WikiText}    & 9.19  & 7.78 & 5.03 & 4.58 & 3.99 & 3.48 & 3.14 & 2.82 & 2.42 & 2.27 & 2.13 & 2.16 & 2.12 \\
\multirow{3}{*}{\textbf{BLOOM}}  & \textbf{GitHub}      & 8.39  & 1.25 & 1.20 & 1.21 & 1.21 & 1.23 & 1.29 & 1.29 & 1.28 & 1.25 & 1.21 & 1.02 & 1.00 \\
                                 & \textbf{OpenWebText} & 8.33  & 1.27 & 1.30 & 1.24 & 1.24 & 1.27 & 1.32 & 1.34 & 1.33 & 1.26 & 1.16 & 1.01 & 1.00 \\
                                 & \textbf{WikiText}    & 8.42  & 1.27 & 1.28 & 1.30 & 1.31 & 1.34 & 1.41 & 1.43 & 1.41 & 1.32 & 1.22 & 1.01 & 1.00 \\
\multirow{3}{*}{\textbf{GPT2}}   & \textbf{GitHub}      & 2.05  & 1.92 & 1.91 & 1.89 & 1.90 & 1.90 & 1.92 & 1.94 & 1.98 & 2.03 & 2.05 & 1.70 & 1.11 \\
                                 & \textbf{OpenWebText} & 2.05  & 1.92 & 1.91 & 1.89 & 1.88 & 1.88 & 1.88 & 1.90 & 1.91 & 1.96 & 2.02 & 2.24 & 1.49 \\
                                 & \textbf{WikiText}    & 2.05  & 1.92 & 1.91 & 1.89 & 1.88 & 1.88 & 1.88 & 1.90 & 1.91 & 1.97 & 2.03 & 2.19 & 1.56 \\
\multirow{3}{*}{\textbf{Llama2}} & \textbf{GitHub}      & 24.87 & 1.00 & 1.00 & 1.00 & 1.00 & 1.00 & 1.00 & 1.01 & 1.01 & 1.01 & 1.02 & 1.03 & 1.17 \\
                                 & \textbf{OpenWebText} & 52.23 & 1.00 & 1.00 & 1.00 & 1.00 & 1.00 & 1.01 & 1.01 & 1.02 & 1.02 & 1.03 & 1.05 & 1.44 \\
                                 & \textbf{WikiText}    & 24.70 & 1.00 & 1.00 & 1.01 & 1.01 & 1.02 & 1.03 & 1.05 & 1.09 & 1.16 & 1.20 & 1.26 & 1.30 \\ \bottomrule
\end{tabular}%
}
\end{table}

\paragraph{Relative norm.} We report the relative norm for all pretrained models and datasets in Table \ref{tab:relative-norm}. 
\begin{table}[]
\caption{Relative norm for models, datasets and at each layer.}
\label{tab:relative-norm}
\resizebox{\columnwidth}{!}{%
\begin{tabular}{@{}llrrrrrrrrrrrrr@{}}
\toprule
 &
   &
  \textbf{Layer 0} &
  \textbf{Layer 1} &
  \textbf{Layer 2} &
  \textbf{Layer 3} &
  \textbf{Layer 4} &
  \textbf{Layer 5} &
  \textbf{Layer 6} &
  \textbf{Layer 7} &
  \textbf{Layer 8} &
  \textbf{Layer 9} &
  \textbf{Layer 10} &
  \textbf{Layer 11} &
  \textbf{Layer 12} \\ \midrule
\multirow{3}{*}{\textbf{BERT}}   & \textbf{GitHub}      & 0.445 & 0.483 & 0.569 & 0.616 & 0.648 & 0.707 & 0.764 & 0.786 & 0.768 & 0.686 & 0.631 & 0.562 & 0.473 \\
                                 & \textbf{OpenWebText} & 0.465 & 0.546 & 0.660 & 0.759 & 0.877 & 0.977 & 0.973 & 0.967 & 0.953 & 0.901 & 0.777 & 0.658 & 0.596 \\
                                 & \textbf{WikiText}    & 0.454 & 0.502 & 0.626 & 0.695 & 0.798 & 0.916 & 0.968 & 0.965 & 0.949 & 0.887 & 0.756 & 0.682 & 0.627 \\
\multirow{3}{*}{\textbf{BLOOM}}  & \textbf{GitHub}      & 0.013 & 0.123 & 0.232 & 0.279 & 0.343 & 0.385 & 0.343 & 0.306 & 0.301 & 0.306 & 0.325 & 0.219 & 0.181 \\
                                 & \textbf{OpenWebText} & 0.012 & 0.138 & 0.194 & 0.264 & 0.315 & 0.342 & 0.297 & 0.267 & 0.264 & 0.300 & 0.392 & 0.575 & 0.589 \\
                                 & \textbf{WikiText}    & 0.013 & 0.149 & 0.222 & 0.287 & 0.328 & 0.352 & 0.336 & 0.301 & 0.295 & 0.325 & 0.407 & 0.494 & 0.491 \\
\multirow{3}{*}{\textbf{GPT2}}   & \textbf{GitHub}      & 0.999 & 0.994 & 0.996 & 0.972 & 0.812 & 0.762 & 0.672 & 0.600 & 0.489 & 0.442 & 0.386 & 0.303 & 0.123 \\
                                 & \textbf{OpenWebText} & 1.000 & 0.996 & 0.990 & 0.989 & 0.986 & 0.983 & 0.981 & 0.979 & 0.974 & 0.841 & 0.631 & 0.480 & 0.075 \\
                                 & \textbf{WikiText}    & 1.000 & 0.995 & 0.991 & 0.989 & 0.987 & 0.986 & 0.984 & 0.984 & 0.981 & 0.933 & 0.680 & 0.555 & 0.110 \\
\multirow{3}{*}{\textbf{Llama2}} & \textbf{GitHub}      & 0.029 & 0.221 & 0.221 & 0.222 & 0.222 & 0.222 & 0.223 & 0.223 & 0.224 & 0.225 & 0.226 & 0.227 & 0.183 \\
                                 & \textbf{OpenWebText} & 0.038 & 0.146 & 0.146 & 0.146 & 0.147 & 0.147 & 0.148 & 0.148 & 0.150 & 0.152 & 0.154 & 0.154 & 0.148 \\
                                 & \textbf{WikiText}    & 0.034 & 0.060 & 0.060 & 0.060 & 0.060 & 0.061 & 0.062 & 0.063 & 0.065 & 0.068 & 0.072 & 0.076 & 0.191 \\ \bottomrule
\end{tabular}%
}
\end{table}






\subsection{On cluster structure}


\begin{figure}[p]
\centering
\includegraphics[width=0.98\textwidth]{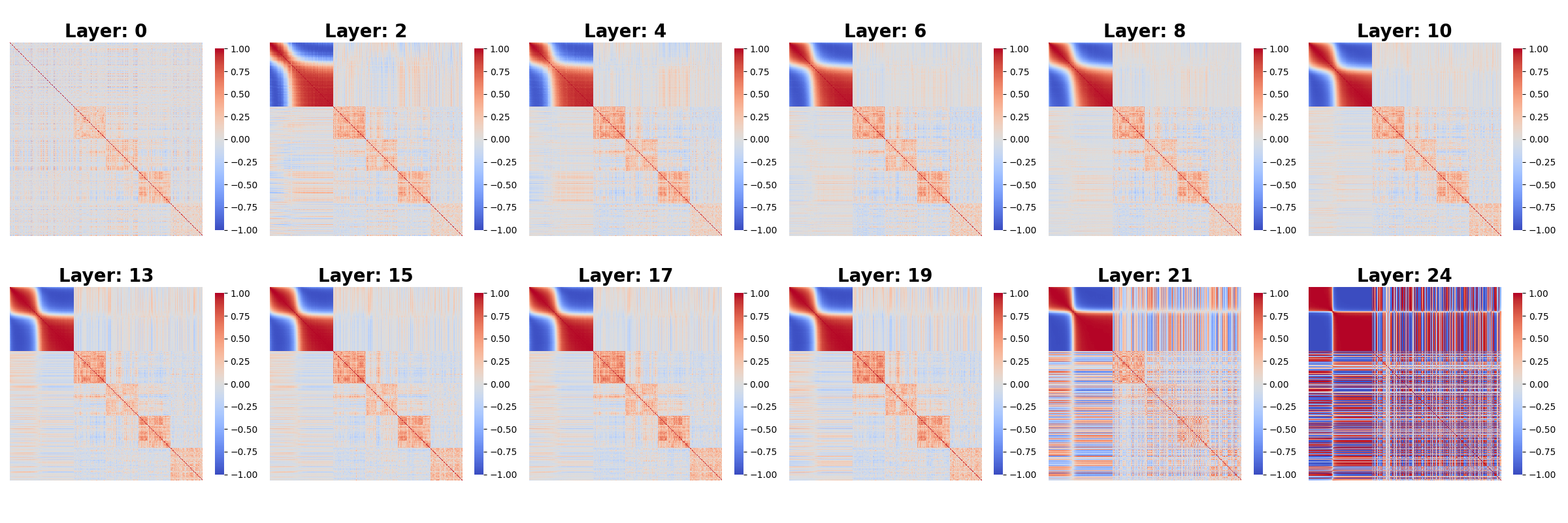}
\caption{Gram matrix of positional basis and context basis; Openwebtext, BLOOM}
\label{fig:gram-openwebtext-bloom}
\end{figure}

\begin{figure}[p]
\centering
\includegraphics[width=0.98\textwidth]{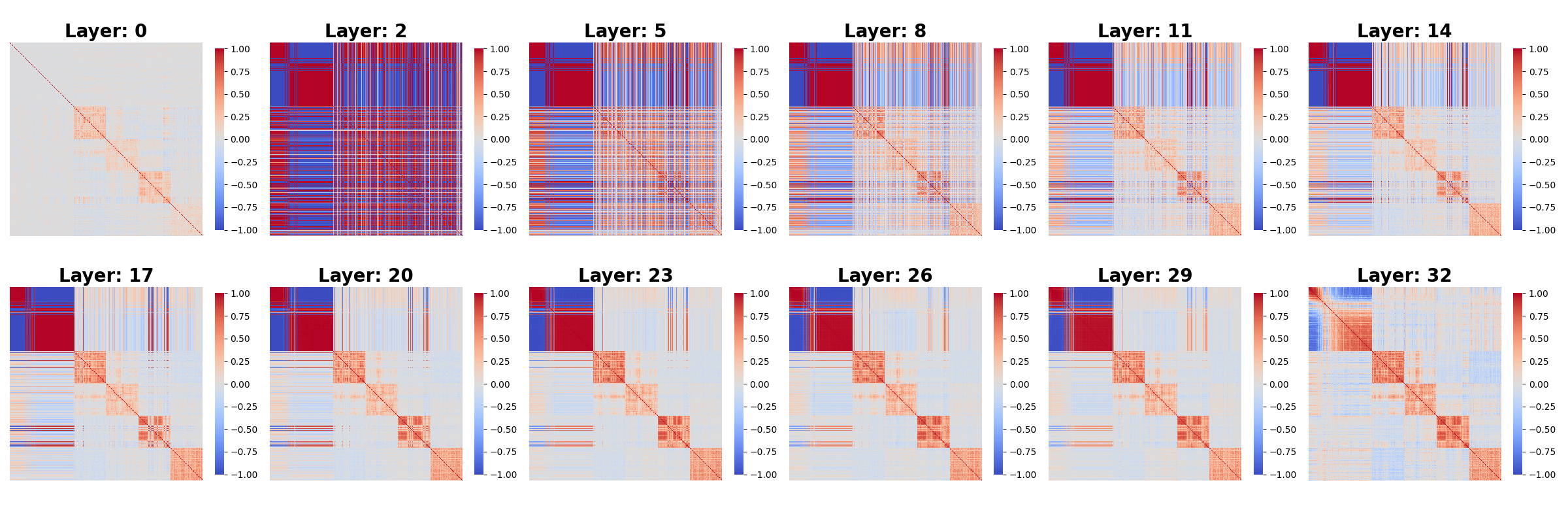}
\caption{Gram matrix of positional basis and context basis; Openwebtext, Llama2}
\label{fig:gram-openwebtext-llama2}
\end{figure}

\begin{figure}[p]
\centering
\includegraphics[width=0.98\textwidth]{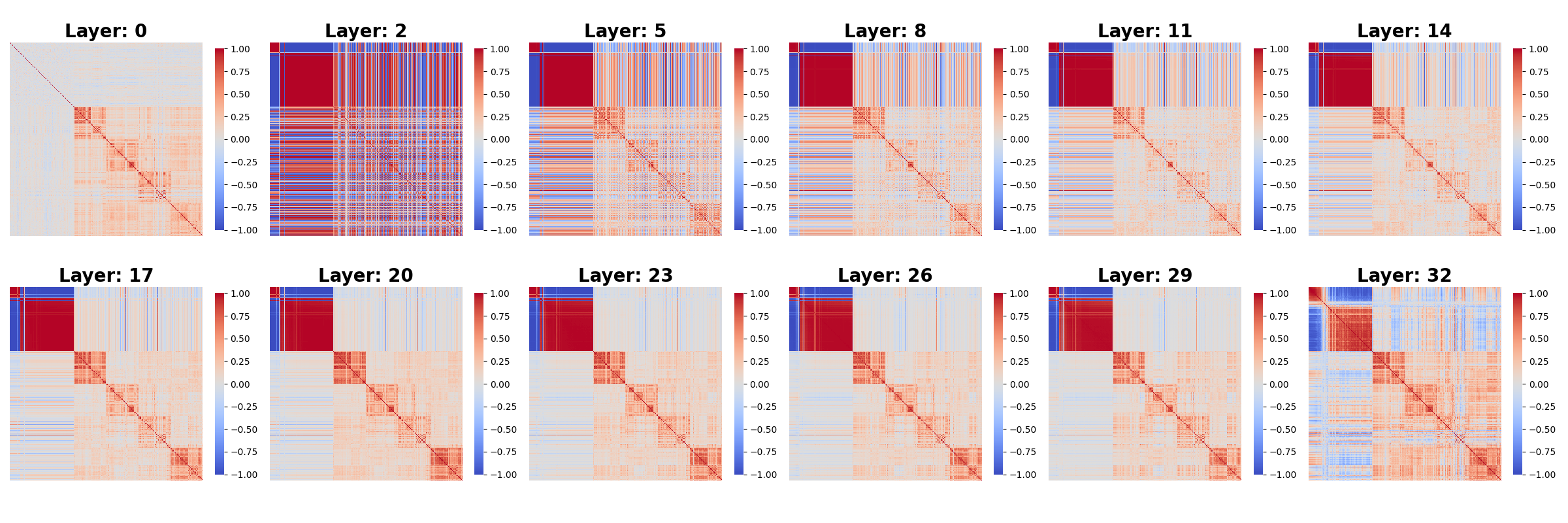}
\caption{Gram matrix of positional basis and context basis; GitHub, Llama}
\label{fig:gram-github-llama2}
\end{figure}

See Figure \ref{fig:gram-openwebtext-bloom}---Figure \ref{fig:gram-github-llama2}. The bottom right part of the normalized Gram matrix, namely $\ctx$-$\ctx$ part, exhibit progressively clear block structures. Four blocks indicate that there are $4$ clusters among all $\ctx$ vectors---which correspond to exactly $4$ documents we sample the sequences from.





\section{Additional empirical results for Section~\ref{sec:smoothness}} \label{sec:append-smoothness}





\begin{figure}[p]
\centering
\includegraphics[width=0.98\textwidth]{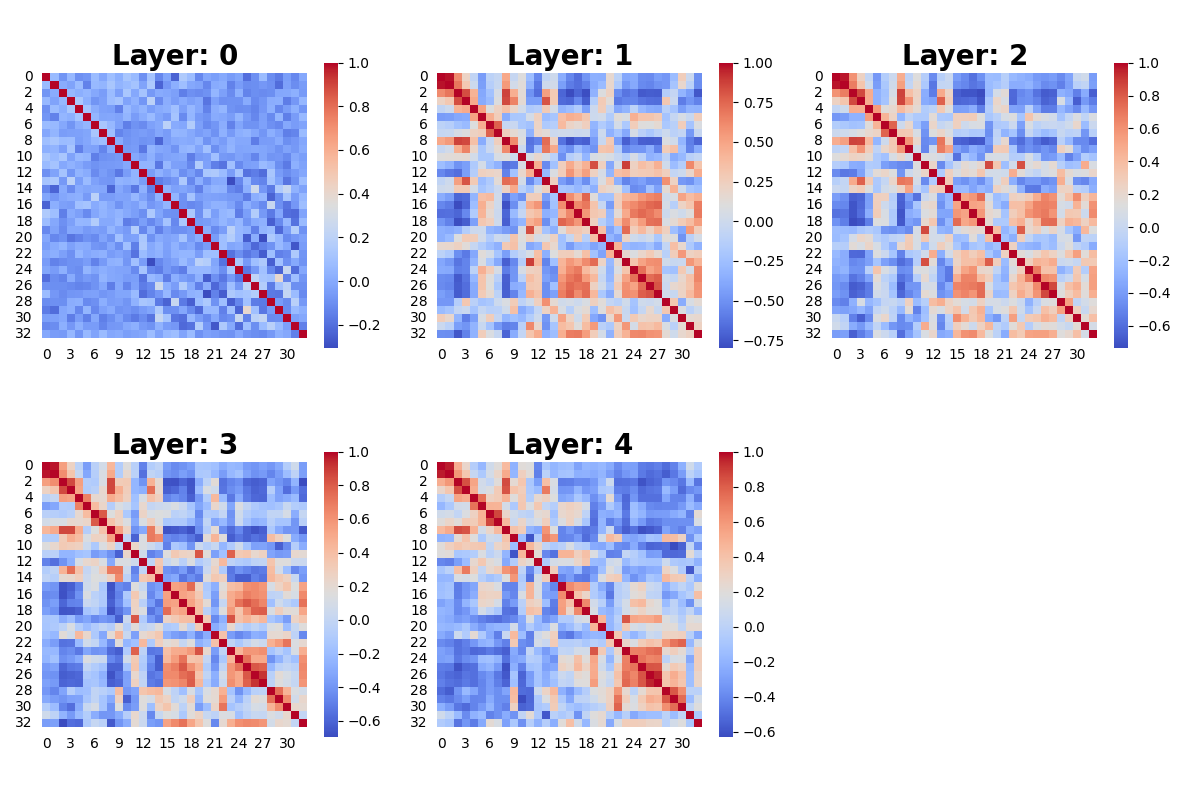}
\caption{Gram matrix of positional basis; \textbf{Addition} with absolute PE shows nonsmoothness.}
\label{fig:add-gram-abs}
\end{figure}

\begin{figure}[p]
\centering
\includegraphics[width=0.98\textwidth]{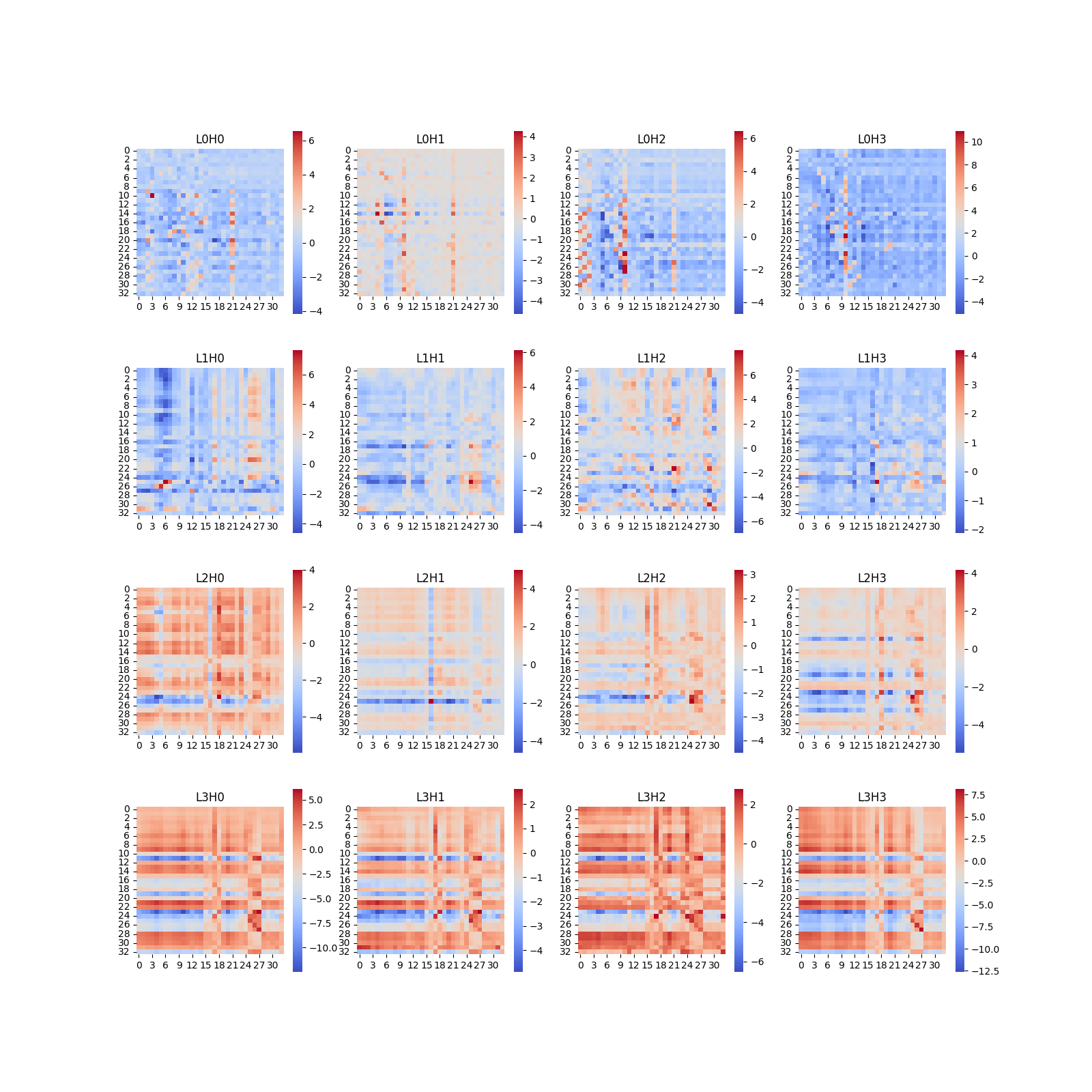}
\caption{QK matrix of positional basis; \textbf{Addition} with absolute PE shows nonsmoothness.}
\label{fig:add-qk-abs}
\end{figure}

\begin{figure}[p]
\centering
\includegraphics[width=0.98\textwidth]{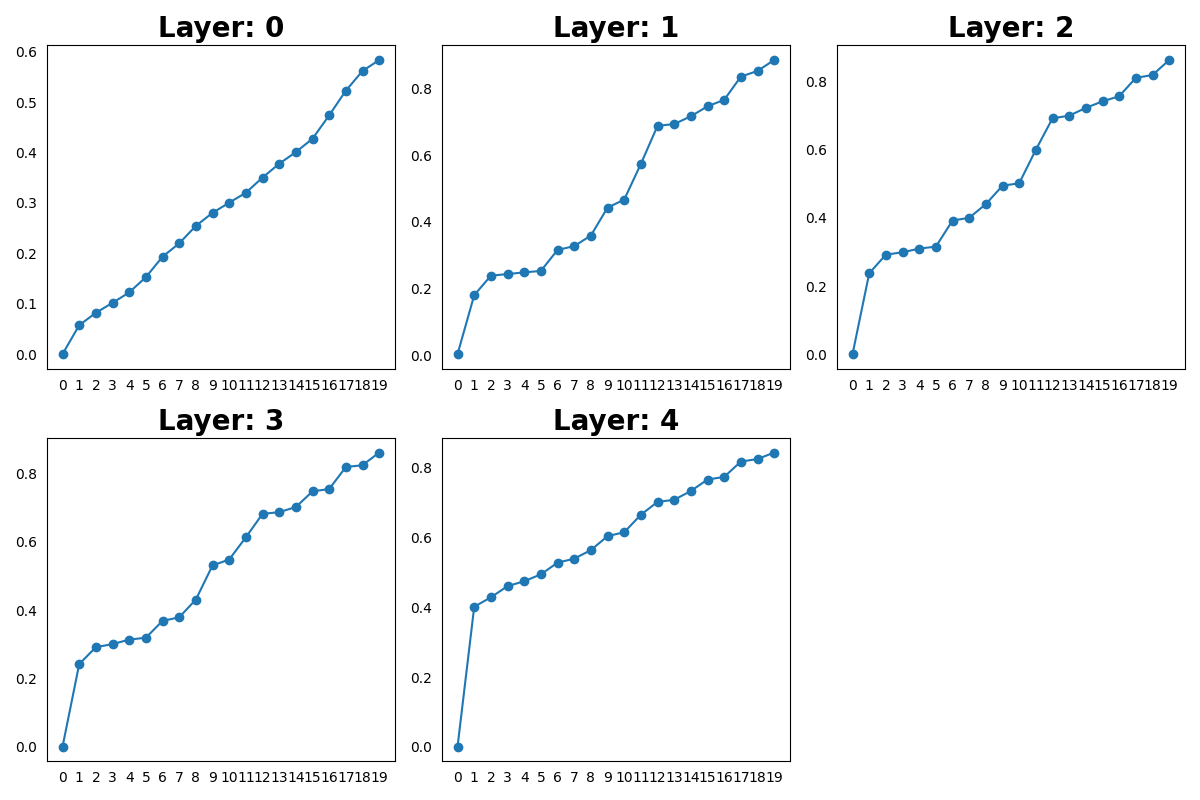}
\caption{Fourier of positional basis; \textbf{Addition} with absolute PE depends on higher-frequency components.}
\label{fig:add-fourier-abs}
\end{figure}

\begin{figure}[p]
\centering
\includegraphics[width=0.98\textwidth]{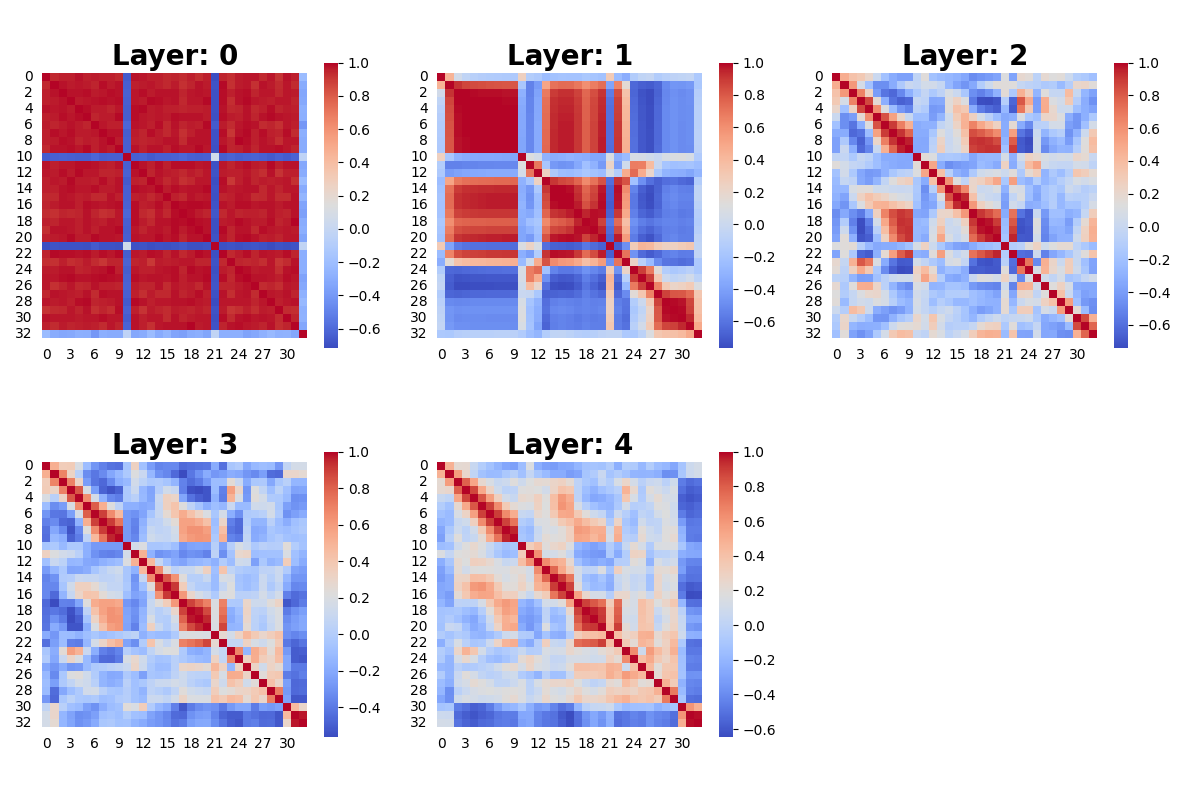}
\caption{Fourier of positional basis; \textbf{Addition} with rotary PE shows nonsmoothness.}
\label{fig:add-gram-rotary}
\end{figure}

\begin{figure}[p]
\centering
\includegraphics[width=0.98\textwidth]{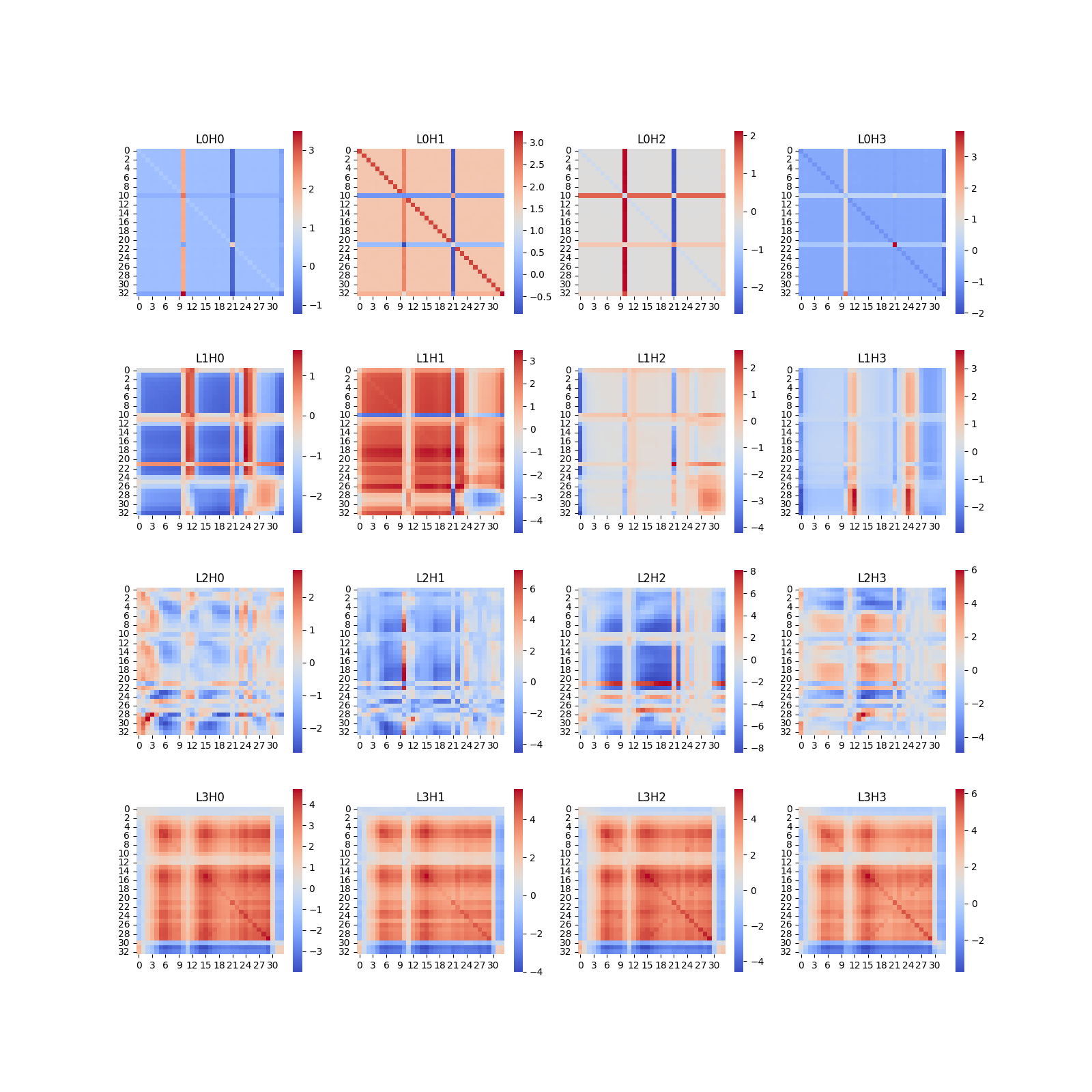}
\caption{QK matrix of positional basis; \textbf{Addition} with absolute PE shows nonsmoothness.}
\label{fig:add-qk-rotary}
\end{figure}

\begin{figure}[p]
\centering
\includegraphics[width=0.98\textwidth]{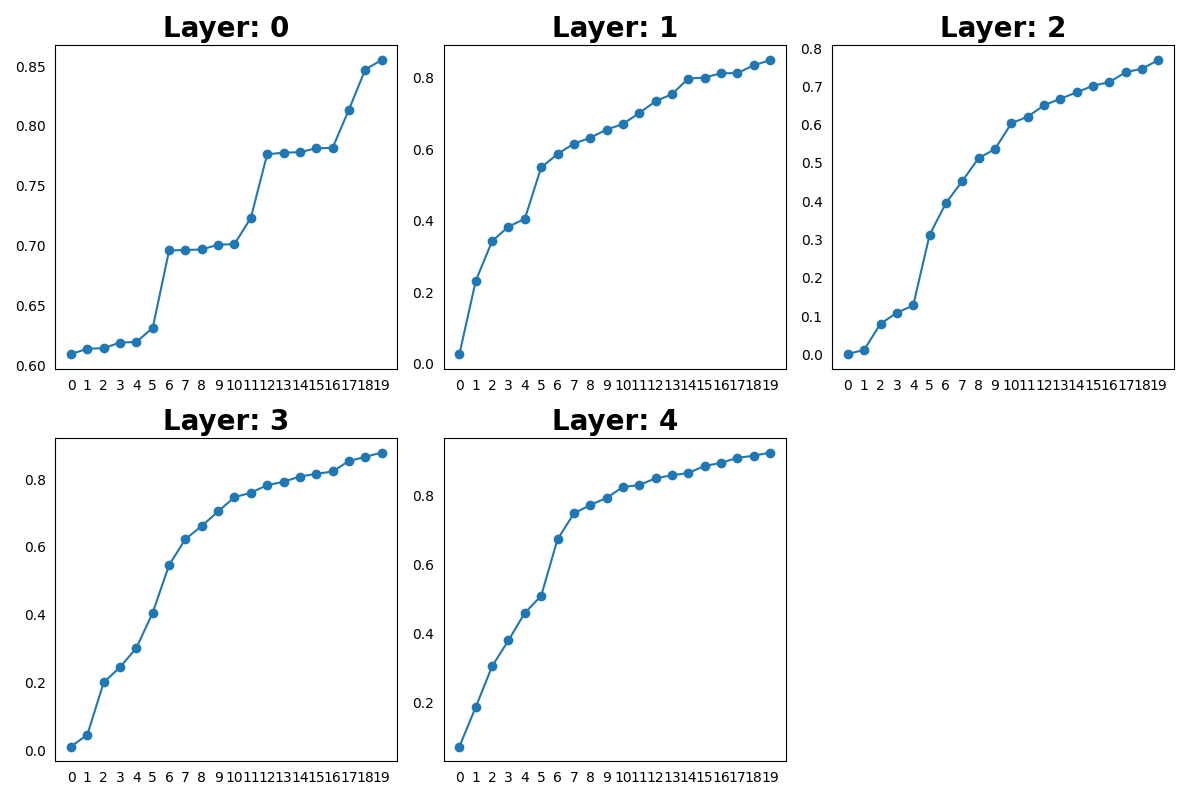}
\caption{Fourier of positional basis; \textbf{Addition} with absolute PE depends on higher-frequency components.}
\label{fig:add-fourier-rotary}
\end{figure}

\subsection{On robustness of positional basis}\label{sec:append-pos-ood}

\begin{figure}[p]
\centering
\includegraphics[width=0.98\textwidth]{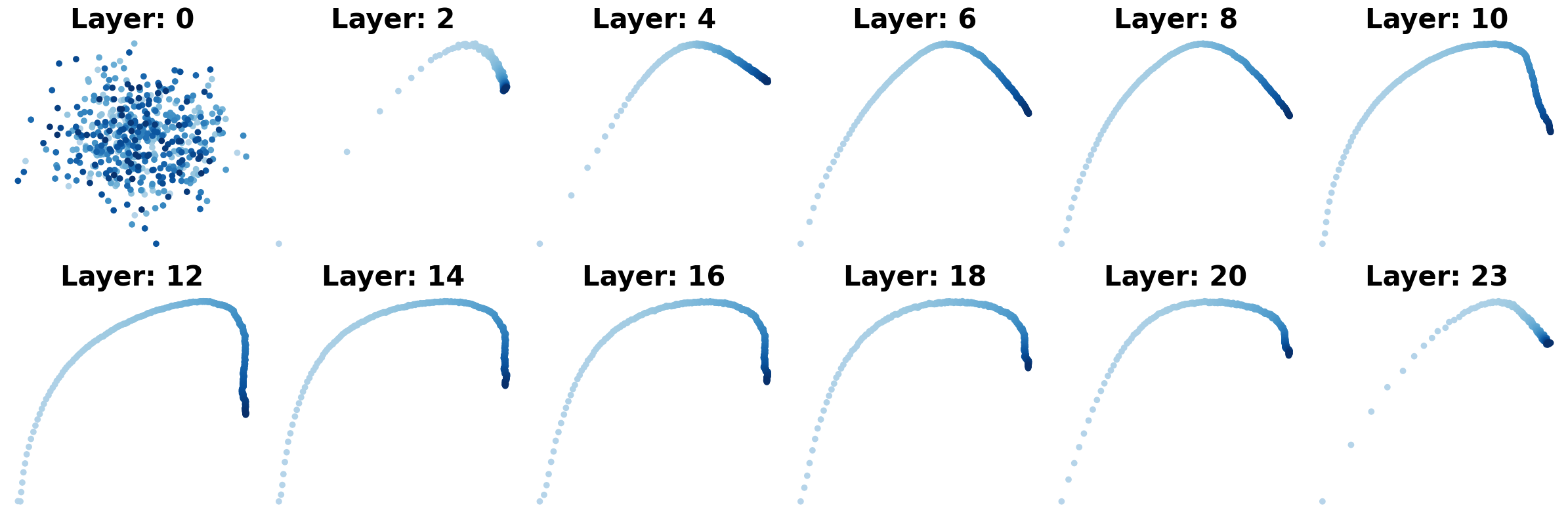}
\caption{Top-2 PC plot of positional basis ; Bloom on random tokens}
\label{fig:pos_bloom_random}
\end{figure}

In Table~\ref{tab:ood}, we sampled OOD sequences to test if positional basis is robust and reported the ratio explained by top-10 frequencies. 

Here we provide a visual examination, which further confirms that positional basis possesses similar low-rank and spiral structure. We choose BLOOM, which has been pretrained on both natural languages and programming languages, and we use random sampled tokens to test whether its associated positional basis still has similar structure.

We find that similar structure persists even for the above randomly sampled sequences. See Figure \ref{fig:pos_bloom_random}.

\subsection{On sparse and local attention}\label{sec:append-local-attn}

\begin{figure}[h]
\centering
\includegraphics[width=0.5\textwidth]{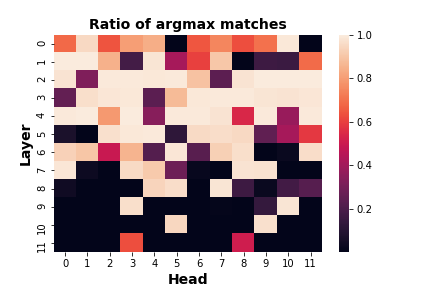}
\caption{\textbf{Ratio of positions that satisfy the argmax property \eqref{eq:argmax}}.}
\label{fig:argmax}
\end{figure}

Here we provide evidence that the argmax property \eqref{eq:argmax} is satisfied in GPT-2 at a substantial level. We consider a variant of $\pos$-$\pos$ QK constituent matrix $\tilde \mP \mW \tilde \mP^\top$ where each column of $\tilde \mP$ is $\pos_t + \vmu$, because we find the global mean $\vmu$ helps satisfy this argmax property. 

Note that checking this $\pos$-$\pos$ QK constituent instead of the QK matrix is more straightforward: $\pos$-$\pos$ QK constituent reflects the property of the model and positional basis, without needing the context information.

For each layer and each head, we calculate $\tilde \mP \mW \tilde \mP^\top$ and examine the fraction of positions $t$ such that 
\begin{equation*}
    \argmax_{t' \le t} [\tilde \mP \mW \tilde \mP^\top]_{t,t'} = t.
\end{equation*}
Figure~\ref{fig:argmax} shows the ratios for each layer and head. It is clear that this argmax property is mostly satisfied in many heads, especially among early layers.

\subsection{On Addition experiments}\label{sec:append-addition}


We have manually generated the addition dataset. The number of digits of the addition ranges from 5 to 10 , and is sampled under uniform distribution in the training process. The model achieve above 99\% accuracy on training set and in-distribution test set. However, the model does not achieve length generalization. It achieves 32.6\%, 8.0\%, 11.1\%, and 9.9\% accuracy on 1k samples of addition with digits length 1, 2, 3, 4 respectively. 

\paragraph{Similar findings for transformers with relative embedding.} Additionally, we implement a rotary-embedding based model by replacing the absolute positional embedding with RoPE \cite{su2021roformer}. For the rotary-embedding based model, the corresponding accuracy is 35.6\%, 13.4\%, 20.0\%, 38.5\%. 

\paragraph{Visualizing discontinuity.} We plot the QK matrices and Gram matrices for both transformers---with absolute embedding and with rotary position embedding. See~Figure \ref{fig:add-gram-abs}---Figure~\ref{fig:add-fourier-rotary}. Notice that the unsmoothness is pervasive in the Gram matrix of positional basis and QK matrix \textbf{across different layers, heads, and models}. 

\paragraph{Quantitative measurements.} We calculated the ratios that low-frequency components can explain in the Fourier space at various layers and heads. The lack of smoothness is further confirmed by the Fourier analysis.


\section{Additional empirical results for Section~\ref{sec:incoh}}

\subsection{On trained weight matrix}\label{sec:append-attention-weight}

\begin{figure}[p]
\centering
\includegraphics[width=0.98\textwidth]{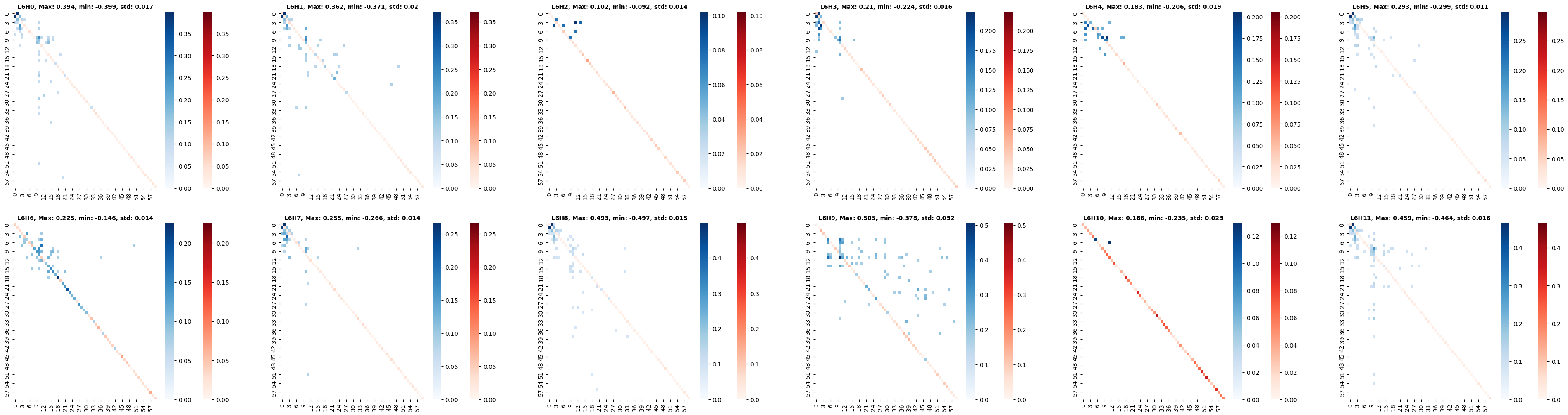}
\caption{Dissecting attention weights, GPT2 L6}
\label{fig:attw-gpt2-L6}
\end{figure}

\begin{figure}[p]
\centering
\includegraphics[width=0.98\textwidth]{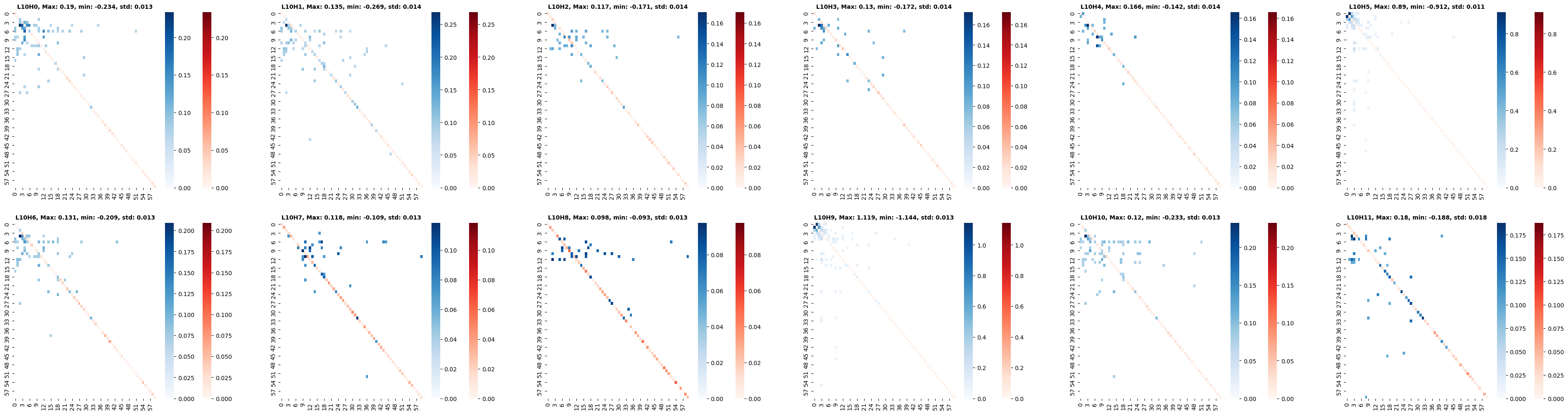}
\caption{Dissecting attention weights, GPT2 L10}
\label{fig:attw-gpt2-L10}
\end{figure}

\begin{figure}[p]
\centering
\includegraphics[width=0.98\textwidth]{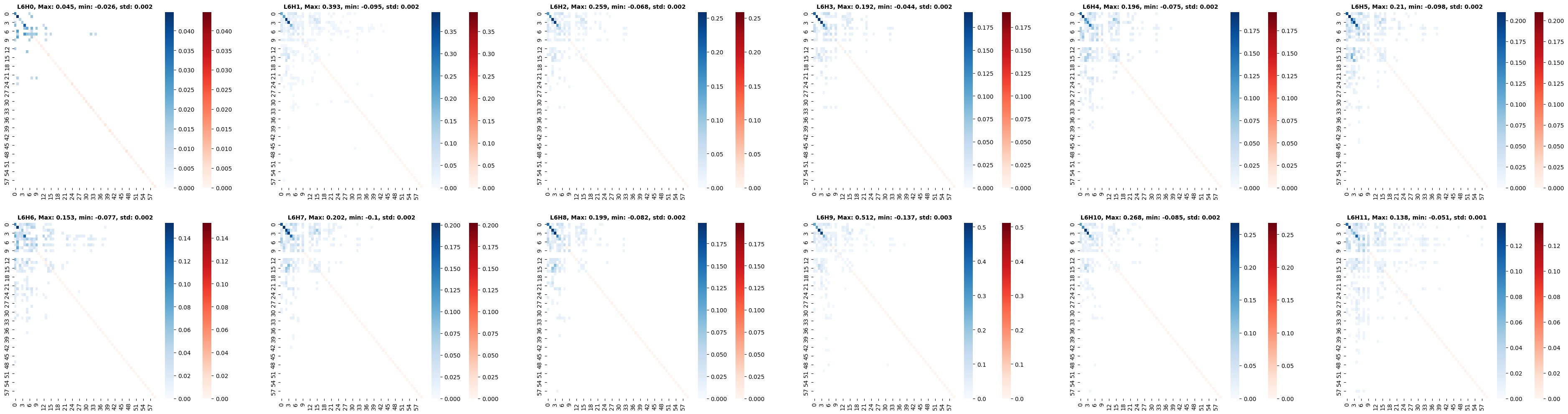}
\caption{Dissecting attention weights, BERT L6}
\label{fig:attw-bert-L6}
\end{figure}

\begin{figure}[p]
\centering
\includegraphics[width=0.98\textwidth]{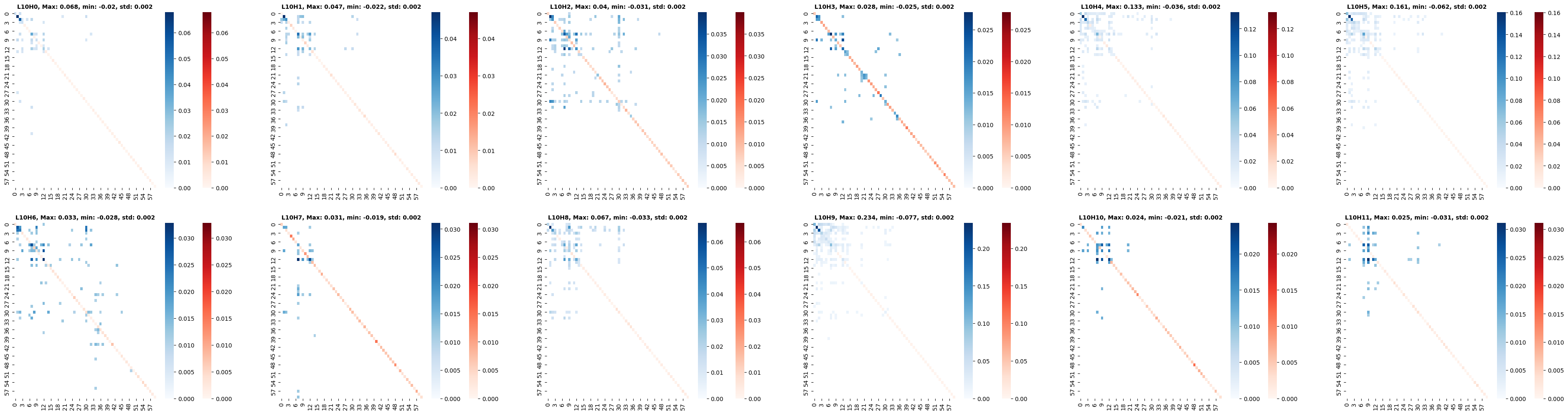}
\caption{Dissecting attention weights, BERT L10}
\label{fig:attw-bert-L10}
\end{figure}

We provide decomposition of trained weight matrices similar to Figure~\ref{fig:attention-weight}. See Figure~\ref{fig:attw-gpt2-L6}---Figure~\ref{fig:attw-bert-L10} for more plots about $\mW = \mW^q (\mW^k)^\top / \sqrt{d_\head}$ at various layers and heads for BERT and GPT-2 model.

\subsection{On enhanced QK/attention visualization}\label{sec:append-QK}

\begin{figure}[p]
\centering
\includegraphics[width=0.98\textwidth]{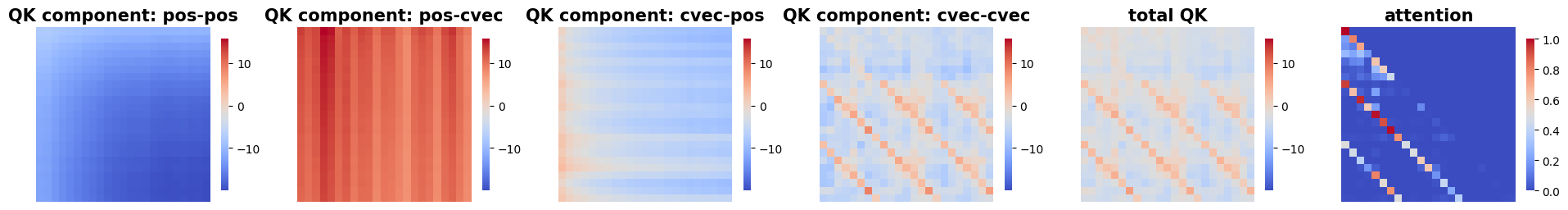}
\caption{QK matrix decomposition with global mean; GPT2, Layer 10 Head 7}
\label{fig:induction-head-gptL10H7}
\end{figure}

\begin{figure}[p]
\centering
\includegraphics[width=0.98\textwidth]{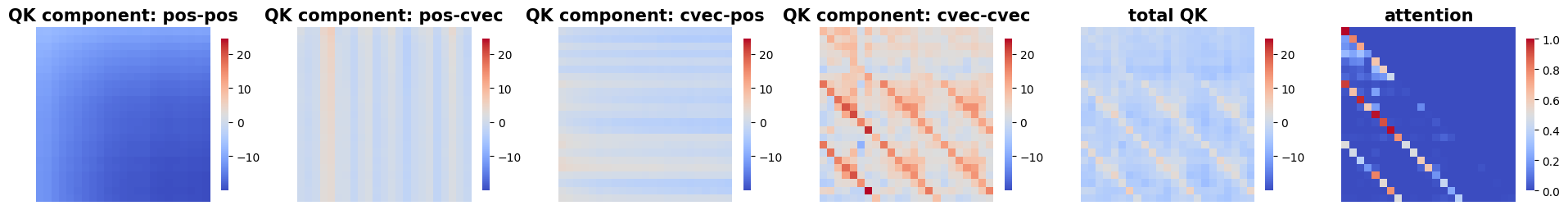}
\caption{QK matrix decomposition without global mean; GPT2, Layer 10 Head 7}
\label{fig:induction-head-gptL10H7-rm}
\end{figure}

\paragraph{On global mean vector.} We show the QK matrix decomposition with global mean (Figure \ref{fig:induction-head-gptL10H7}), and without global mean (Figure \ref{fig:induction-head-gptL10H7-rm}). Note that adding a constant to all entries of the QK matrix will not change the attention matrix, because softmax computes the ratio. We conclude that the global mean vector $\vmu$ has little effect on interpretations.



\begin{figure}[p]
\centering
\includegraphics[width=0.98\textwidth]{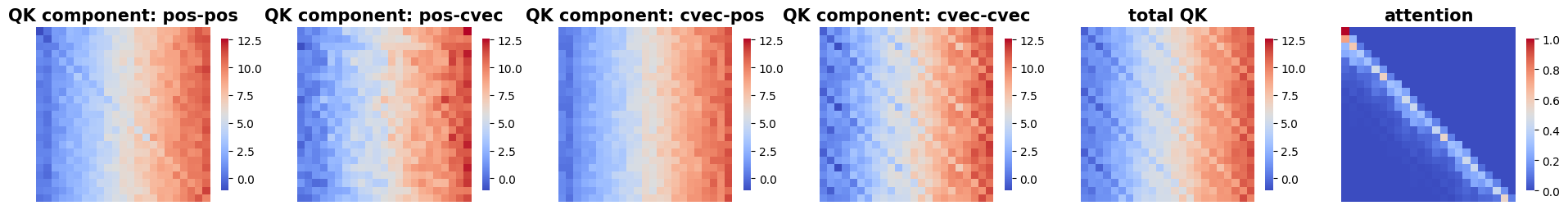}
\caption{QK Decomposition; BLOOM L10H1}
\label{fig:induction-head-bloomL10H1}
\end{figure}

\begin{figure}[p]
\centering
\includegraphics[width=0.98\textwidth]{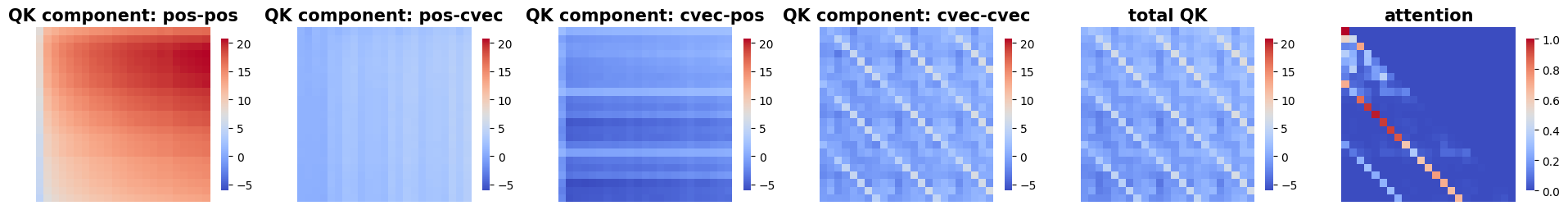}
\caption{QK Decomposition; BLOOM L7H6}
\label{fig:induction-head-bloomL7H6}
\end{figure}

\begin{figure}[p]
\centering
\includegraphics[width=0.98\textwidth]{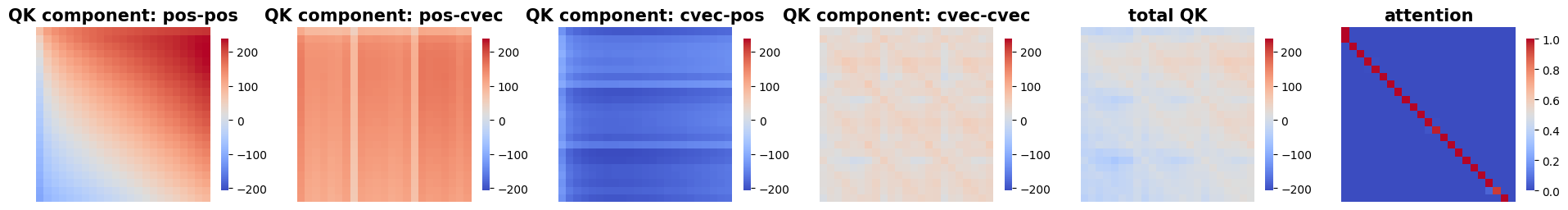}
\caption{QK Decomposition; GPT2 L4H11}
\label{fig:induction-head-gptL4H11}
\end{figure}

\paragraph{Visualizing QK constituents.}
We use the QK decomposition \eqref{QKdecomp} to provide fine-grained visualization for attention heads. In Figure \ref{fig:induction-head-bloomL10H1}---Figure \ref{fig:induction-head-gptL4H11}, we select various heads on GPT-2 and BLOOM model and provide 6 subplots based on (i) the 4 QK constituents, (ii) the original QK matrix, and (iii) the attention matrix.

We find that visualization based on individual QK constituents provides additional information beyond visualizing the attention matrix alone. Our selection of heads is representative, as many similar heads appear in GPT-2 and BLOOM.

\section{Proofs for theoretical results}\label{sec:append-proofs}

We introduce some additional notations. Denote the indicator function by $\vone$. We denote by $\vone_N$ the vector $(1,1,\ldots,1)^\top \in \R^N$. For a complex matrix $\mA$, we denote the conjugate transpose by $\mA^*$. For convenience, for a matrix $\mA$, we will write $\Delta \mA$ instead of $\Delta^{(1,1)} \mA$. For a vector $\vx \in \C^N$, we also write $\Delta \vx$ to denote the finite difference vector $N \cdot (x_1-x_0, x_2 - x_1, \ldots,  x_N - x_{N-1})^\top$ (where $x_N = x_0$). We will say that a Hermitian matrix $\mA \in \C^{N \times N}$ is positive semidefinite (PSD) if and only if $\vx^* \mA \vx \ge 0 $ for every $\vx \in \C^N$. Denote by $\Re(x)$ the real part of a complex number $\vx \in \C$.

\subsection{Proof of Theorem~\ref{thm:fourier}}\label{sec:proof-fourier}
In this subsection, we denote a generic dimension by $N$ and $\omega = \exp(-2\pi i / N)$.  We need some standard definitions and properties; see \citet{broughton2018discrete} for example. 

The discrete Fourier transform (DFT) matrix $\mF \in \C^{N \times N}$ is given by $F_{tt'} = \omega^{(t-1)(t'-1)}$ for $1 \le t,t' \le N $. The inverse discrete Fourier transform (IDFT) matrix is $N^{-1} \mF^*$. Both the DFT matrix and the IDFT matrix are symmetric (not Hermitian). Sometimes we prefer to write $\mF^\top$ instead of $\mF$ simply for formality. For a generic matrix $\mA \in \R^{N \times N}$, we denote by $\hat \mA \in \C^{N \times N}$ the matrix after its 2-d DFT. It satisfies
\begin{align}
\begin{split}\label{eq:DFT}
\hat \mA &= \mF \mA \mF^\top, \\
\mA &= N^{-2} \mF^* \hat \mA (\mF^*)^\top.
\end{split}
\end{align}

The following simple lemma is a consequence of integration-by-parts for the discrete version. For completeness we include a proof.
\begin{lem}\label{lem:fourier-ibp}
Let $\vx \in \R^{N}$ be a vector, and $\hat \vx = \mF \vx$ be its DFT. Then for $t=1,\ldots,N$,
\begin{align}
&\hat x_t = \gamma_t (\mF \Delta \vx)_t + \vone\{ t = 1\} \cdot \sum_{t'=1}^N x_{t'}, \qquad \text{where} \label{eq:hatx} \\
&\gamma_t := N^{-1} \left( 1- \exp \Big(\frac{-2\pi i (t-1)}{N} \Big) \right)^{-1} \quad \text{for} ~~t>1~~\text{and}~~\gamma_1:=1 . \label{def:gamma}
\end{align}
\end{lem}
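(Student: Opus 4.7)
The plan is to derive the identity by a direct computation, establishing the intermediate relation $(\mF \Delta \vx)_t = N(1 - \omega^{t-1}) \hat x_t$ for every $t$—this is the Fourier-side analogue of discrete integration-by-parts—and then simply solving for $\hat x_t$. All of the work is bookkeeping with the periodic convention $x_0 = x_N$, not anything deeper.

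First I would expand $(\mF \Delta \vx)_t = N \sum_{t'=1}^N \omega^{(t-1)(t'-1)} (x_{t'} - x_{t'-1})$ using the definition of $\Delta \vx$, and split it into two sums. The first sum, $N \sum_{t'=1}^N \omega^{(t-1)(t'-1)} x_{t'}$, is already $N \hat x_t$. For the second sum, I would reindex by $s = t'-1$ so that $s$ ranges over $\{0,1,\ldots,N-1\}$, then use the periodic convention $x_0 = x_N$ together with $\omega^{(t-1)N} = 1$ to move the $s=0$ term to $s=N$. This turns the sum into $\sum_{s=1}^N \omega^{(t-1)s} x_s = \omega^{t-1} \hat x_t$ after pulling out a factor of $\omega^{t-1}$. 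Combining the two pieces yields the claimed identity $(\mF \Delta \vx)_t = N(1 - \omega^{t-1}) \hat x_t$.

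With this relation in hand, the lemma follows in two cases. For $t > 1$, the factor $1 - \omega^{t-1}$ is nonzero, so dividing through gives $\hat x_t = \gamma_t (\mF \Delta \vx)_t$ with $\gamma_t$ as defined in \eqref{def:gamma}, and the indicator term vanishes. At $t = 1$, both sides of $N(1-\omega^{t-1}) \hat x_t = (\mF \Delta \vx)_t$ vanish—the left side because $1-\omega^0 = 0$, and the right side because $(\mF \Delta \vx)_1 = N \sum_{t'=1}^N (x_{t'} - x_{t'-1}) = 0$ by telescoping under $x_0 = x_N$—so the relation carries no information about $\hat x_1$. We therefore read $\hat x_1 = \sum_{t'=1}^N x_{t'}$ directly from the DFT definition, and this is precisely the content of the extra term $\vone\{t=1\} \cdot \sum_{t'=1}^N x_{t'}$; the convention $\gamma_1 := 1$ is harmless since it multiplies $(\mF \Delta \vx)_1 = 0$. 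The only step requiring care is the index shift in the second sum, which is the heart of the discrete integration-by-parts; there is no genuine obstacle.
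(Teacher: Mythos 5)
Your proposal is correct and follows essentially the same route as the paper's proof: both establish the key relation $(\mF\Delta\vx)_t = N(1-\omega^{t-1})\hat x_t$ by shifting the index in the $x_{t'-1}$ sum under the periodic convention $x_0=x_N$ (the paper phrases this as Abel summation, you as a reindexing, which is the same computation), and both handle $t=1$ separately by noting the telescoping sum vanishes. No gaps.
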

\begin{proof}
If $t=1$, then $(\mF \vx)_t = \sum_{t'=1}^{N} x_{t'}$, and $(\mF \Delta \vx)_t = N \sum_{t'=1}^{N} (x_{t'} - x_{t'-1}) = 0$. For $t \neq 1$, 
\begin{align*}
(\mF \Delta \vx)_t &= N \sum_{t'=1}^N \omega^{(t-1)(t'-1)} (x_{t'} - x_{t'-1}) = N \sum_{t'=1}^N \big(\omega^{(t-1)(t'-1)} - \omega^{(t-1)t'}\big) x_{t'} \\
&= N \big( 1 - \omega^{(t-1)} \big) \sum_{t'=1}^N \omega^{(t-1)(t'-1)}x_{t'} = \gamma_t^{-1} (\mF \vx )_t.
\end{align*}
This shows $\hat x_t = \gamma_t (\mF \Delta \vx)_t$ for $t \neq 1$ and completes the proof.
\end{proof}
A simple bound on the modulus $|\gamma_t|$ is given by the following lemma.
\begin{lem}\label{lem:gamma}
Let $\gamma_t$ be defined by \eqref{def:gamma}. For positive integer $1< t \le N/2$, we have
\begin{equation*}
|\gamma_{N-t+2}| = |\gamma_t| \le \frac{1}{8(t-1)}\, .
\end{equation*}
\end{lem}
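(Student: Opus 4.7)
The plan is to reduce the bound on $|\gamma_t|$ to a trigonometric inequality on the half-range $u = (t-1)/N \in (0,1/2)$. First, I would establish the symmetry $|\gamma_{N-t+2}| = |\gamma_t|$: since $-2\pi i (N-t+1)/N \equiv 2\pi i (t-1)/N \pmod{2\pi i}$, we have $\gamma_{N-t+2} = N^{-1}(1 - e^{2\pi i (t-1)/N})^{-1}$, which is the complex conjugate of $\gamma_t$, so the two moduli coincide.

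Second, I would obtain a closed form. Using $|1 - e^{-i\theta}|^2 = 2 - 2\cos\theta = 4\sin^2(\theta/2)$, the formula becomes $|\gamma_t| = 1/(2N \sin(\pi(t-1)/N))$. The hypothesis $1 < t \le N/2$ places $u = (t-1)/N$ in $(0, 1/2)$, and the target bound $|\gamma_t| \le 1/(8(t-1))$ is then equivalent to the pointwise sine inequality
\[
\sin(\pi u) \ge 4u, \qquad u \in (0, 1/2).
\]
I would attempt to establish this inequality via a sharp refinement of Jordan's inequality tailored to the half-range $(0,1/2)$.

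The \emph{main obstacle} is precisely this sine inequality. The classical Jordan inequality $\sin y \ge (2/\pi) y$ on $[0, \pi/2]$ gives only $\sin(\pi u) \ge 2u$, which yields $|\gamma_t| \le 1/(4(t-1))$---exactly half as sharp as the stated bound. Worse, since $\sin(\pi u) \le \pi u < 4u$ identically for $u > 0$ (because $\pi < 4$), no pointwise lower bound of the form $\sin(\pi u) \ge c u$ with $c \ge 4$ can hold on any subinterval of $(0, 1/2)$. Consequently, any genuine proof of $|\gamma_t| \le 1/(8(t-1))$ must bypass pointwise sine estimates entirely. Identifying such an argument---for instance, by exploiting the explicit structure $\gamma_t = \tfrac{1}{2N}(1 + i\cot(\pi(t-1)/N))$ (which shows $\Re(\gamma_t) = 1/(2N)$ is constant) to extract additional cancellation, or by reinterpreting the normalization in the definition~\eqref{def:gamma}---is the principal difficulty I anticipate.
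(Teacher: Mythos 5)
Your diagnosis is correct, and the obstacle you identify is not a gap in your argument but a flaw in the statement itself. Since $|\gamma_t| = \frac{1}{2N\sin(\pi(t-1)/N)}$ exactly, the claimed bound is \emph{equivalent} to $\sin(\pi u)\ge 4u$ at $u=(t-1)/N\in(0,1/2]$, and this fails for every $u>0$ because $\sin(\pi u)<\pi u<4u$. Concretely, for $t=2$ and $N$ large, $|\gamma_2|=\frac{1}{2N\sin(\pi/N)}\to\frac{1}{2\pi}\approx 0.159>\frac18$, and already at $N=4$ one has $|\gamma_2|=\frac{1}{4\sqrt2}\approx 0.177$. The paper's own proof commits exactly the factor-of-two slip you anticipated: from the monotonicity of $\sin\theta/\theta$ on $(0,\pi/2)$ applied at $|\theta|/2$ one gets $\sin(|\theta|/2)\ge\frac{2}{\pi}\cdot\frac{|\theta|}{2}=\frac{|\theta|}{\pi}$, hence $|1-e^{i\theta}|=2\sin(|\theta|/2)\ge \frac{2|\theta|}{\pi}$; the paper instead writes $\sin(|\theta|/2)\ge\sin(\pi/2)\,|\theta|/(\pi/2)$ (plugging $|\theta|$ where $|\theta|/2$ is required) and concludes $|1-e^{i\theta}|\ge 4|\theta|/\pi$, which is how the constant $8$ appears.

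So your Jordan-based bound $|\gamma_t|\le\frac{1}{4(t-1)}$ is the correct conclusion (and your conjugation argument for $|\gamma_{N-t+2}|=|\gamma_t|$ is exactly the symmetry the paper calls obvious); because your reduction is an equivalence, no alternative argument---including the observation that $\Re(\gamma_t)=\frac{1}{2N}$---can recover the constant $8$. The right repair is to restate Lemma~\ref{lem:gamma} with $\frac{1}{4(t-1)}$. This is harmless downstream: the proof of Theorem~\ref{thm:fourier} only uses $|\gamma_t|\le 1$ and $\max_{t\in\gJ}|\gamma_t|\le\frac{1}{8k}$, which becomes $\frac{1}{4k}$, so the theorem holds verbatim with $\frac{6}{(4k)^m}$ in place of $\frac{6}{(8k)^m}$, and its qualitative message (smoothness of $\tilde\mG$ forces a low-rank, low-frequency approximation) is unchanged.
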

\begin{proof}
The equality part is obvious. For any $\theta \in (-\pi,\pi)$, we have
\begin{equation*}
| 1 - \exp(i \theta) |^2 = (1-\cos \theta)^2 + \sin^2\theta = 4\sin^2(\theta/2) = 4\sin^2(|\theta|/2)
\end{equation*}
Since $\sin \theta/\theta$ is monotone decreasing in $(0,\pi/2)$, we have $\sin(|\theta|/2) \ge \sin(\pi/2) |\theta|/(\pi/2)$, Thus,
\begin{equation*}
| 1 - \exp(i \theta) | \ge 4|\theta| / \pi \, .
\end{equation*}
Setting $\theta = -2\pi(t-1)/N $, we obtain the desired upper bound on $|\gamma_t|$.
\end{proof}
Denote $\mGamma = \diag\{\gamma_1,\ldots, \gamma_N\}$. By Lemma~\ref{lem:fourier-ibp}, for any vector $\vx \in \R^N$,
\begin{equation*}
    \mF \vx = \mGamma \mF \Delta \vx + \left( \begin{array}{c} \vx^\top \vone_N \\ 0 \\ \vdots \\ 0 \end{array} \right)\, .
\end{equation*}
Below we will assume that the generic matrix $\mA \in \R^{N \times N}$ is symmetric and satisfies $\mA \vone_N = \vzero.$ Observe that
\begin{align*}
    &\mF \mA = [\mF \mA_{:,1}, \ldots, \mF \mA_{:,N}] = \mGamma \mF [\Delta \mA_{:,1}, \ldots, \Delta \mA_{:,N}], \\
    &\mF \mA \mF^\top = \mGamma \mF (\Delta \mA) \mF^\top \mGamma
\end{align*}
where we used $\mA \vone_N = \vzero$ and $(\Delta \mA) \vone_N = \vzero$. Repeating the second equality $m$ times, we obtain
\begin{equation*}
    \hat \mA = \mF \mA \mF^\top = \mGamma^m \mF (\Delta^{(m,m)} \mA) \mF^\top \mGamma^m.
\end{equation*}
Now fix a generic nonempty index sets $\gI \subset \{1,2,\ldots,N\}$ and denote $\gJ = \{1,\ldots,N\} \setminus \gI$. Consider the block matrix form of $\hat \mA$:
\begin{equation*}
    \hat \mA = \left( \begin{array}{cc} \hat \mA_{\gI,\gI} & \hat \mA_{\gI,\gJ} \\ \hat \mA_{\gJ, \gI} & \hat \mA_{\gJ, \gJ} \end{array} \right) \, .
\end{equation*}
Using the block matrix notation, we derive
\begin{equation*}
    \norm{\hat \mA_{\gI,\gJ}}_{\op} = \norm{\mGamma_{\gI,\gI}^m \big(\mF (\Delta^{(m,m)} \mA) \mF^\top \big)_{\gI, \gJ} \mGamma_{\gJ,\gJ}^m }_{\op} \le \max_{t \in \gI, t' \in \gJ}|\gamma_t \gamma_{t'}|^m \norm{\mF (\Delta^{(m,m)} \mA) \mF^\top}_{\op}\, .
\end{equation*}
Similar inequalities hold for other three blocks. By Lemma~\ref{lem:gamma} we have $|\gamma_t| \le 1$ for all $t$. 
Adding the three inequalities that involve at least one index set $\gJ$, we get
\begin{equation*}
\left \lVert \hat \mA - \left( \begin{array}{cc} \hat \mA_{\gI,\gI} & \vzero \\ \vzero & \vzero \end{array} \right) 
\right \rVert_\op \le 3 \max_{t \in \gJ} |\gamma_t|^m \norm{\mF (\Delta^{(m,m)} \mA) \mF^\top}_{\op}\, .
\end{equation*}
Since $\mF \mF^* = N$, we have $\norm{\mF}_\op = \norm{\mF \mF^*}_\op^{1/2} = N^{1/2}$. Denoting
\begin{equation*}
    \mA^{(\res)} := N^{-2} \mF^* \Big[ \hat \mA - \left( \begin{array}{cc} \hat \mA_{\gI,\gI} & \vzero \\ \vzero & \vzero \end{array} \right) \Big] (\mF^*)^\top\, ,
\end{equation*}
we find
\begin{align}\label{ineq:res-op}
\norm{\mA^{(\res)}}_\op \le 3 \max_{t \in \gJ} |\gamma_t|^m  N^{-2} \norm{\mF}_\op^4 \norm{\Delta^{(m,m)} \mA}_\op \le 3 \max_{t \in \gJ} |\gamma_t|^m  N\norm{\Delta^{(m,m)} \mA}_{\max}
\end{align}
where the first inequality is due to Lemma~\ref{lem:gamma} and the second inequality is due to the inequality between matrix operator norm and max norm.

To finish the proof, let us make some specification: we identify $N$ with $\tilde T$ (namely $2T$), identify $\mA$ with $\tilde \mG$, and identify $\gI = \{1,\ldots,k\} \cup \{\tilde T-k+1,\ldots,\tilde T\}$. These choices satisfy the requirement for $\mA$ because $\tilde \mG$ is symmetric and it satisfies $\tilde \mG \vone_{\tilde T} = 2\mG \vone_T = \vzero$ due to the assumption $\pos_1 + \ldots + \pos_T = \vzero$.

By Lemma~\ref{lem:gamma}, $\max_{t\in \gJ}|\gamma_t| \le 1/(8k)$, so \eqref{ineq:res-op} gives
\begin{equation*}
   \norm{\mA^{(\res)}}_\op \le 6 (8k)^{-m}  T\norm{\Delta^{(m,m)} \tilde \mG}_{\max}
\end{equation*}
By the definition of $\mA^{(\res)}$ and the identities in \ref{eq:DFT},
\begin{equation*}
    \tilde \mG = \mA^{(\res)} + N^{-2} \mF^* \left( \begin{array}{cc} \hat \mA_{\gI,\gI} & \vzero \\ \vzero & \vzero \end{array} \right) (\mF^*)^\top \, .
\end{equation*}

We make the following claim.
\begin{lem}\label{lem:A-lowfreq}
There exists $\mB \in \R^{k \times k}$ such that
\begin{equation} \label{eq:A-lowfreq}
    N^{-2} \left[ \mF^* \left( \begin{array}{cc} \hat \mA_{\gI,\gI} & \vzero \\ \vzero & \vzero \end{array} \right) (\mF^*)^\top \right]_{1:T,1:T} = \mF_{\le k} \mB (\mF_{\le k} \mB)^\top 
\end{equation}
\end{lem}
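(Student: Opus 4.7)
The plan is to exploit the two-dimensional palindromic symmetry of $\tilde\mG$—namely $\tilde G_{t,t'}=\tilde G_{2T+1-t,t'}=\tilde G_{t,2T+1-t'}$—to rewrite $\hat{\mA}$ as a two-dimensional type-\rom{2} DCT of $\mG$ dressed by a phase factor. Folding each dimension of the DFT sum into its half-grid counterpart via palindromic symmetry yields
\[
\hat A_{s,s'} \;=\; 4\,\omega^{-(s-1)/2}\omega^{-(s'-1)/2}\,b_{s,s'},
\qquad
b_{s,s'} := \sum_{t,t'=1}^{T} G_{t,t'}\cos\!\Big(\tfrac{\pi(t-\tfrac12)(s-1)}{T}\Big)\cos\!\Big(\tfrac{\pi(t'-\tfrac12)(s'-1)}{T}\Big),
\]
so that $b_{s,s'}$ is the 2D DCT-II of $\mG$. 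The same calculation, together with $\omega^{T}=-1$, also yields the index-reflection identity $b_{N-s+2,\,s'}=-b_{s,s'}$ (and its analogue in $s'$), which will be the main algebraic tool.

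Next, I would substitute this factored form into the top-left $T\times T$ block of $N^{-2}\mF^{*}\hat{\mA}^{(\gI)}(\mF^{*})^{\top}$ and pair each index $s\in\{N-k+2,\dots,N\}$ with its conjugate partner $N-s+2\in\{2,\dots,k\}$ (and analogously for $s'$). The phase identity $\omega^{-(N-s+1)(t-1/2)}=-\omega^{(s-1)(t-1/2)}$, combined with $b_{N-s+2,s'}=-b_{s,s'}$, converts each paired sum of complex exponentials $\omega^{\pm(s-1)(t-1/2)}$ into the real cosine $2\cos(\pi(s-1)(t-\tfrac12)/T)$. Accounting for the boundary multiplicities (factor $1$ at $s=1$ and factor $2$ at $s\in\{2,\dots,k\}$), the block then takes the bilinear form
\[
\sum_{s,s'=1}^{k}\tilde B_{s,s'}\,\cos\!\Big(\tfrac{\pi(s-1)(t-\tfrac12)}{T}\Big)\cos\!\Big(\tfrac{\pi(s'-1)(t'-\tfrac12)}{T}\Big)
\]
for a real $\tilde{\mB}\in\R^{k\times k}$ built from $\{b_{s,s'}\}_{s,s'\le k}$ and the boundary weights. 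Since $\tilde{\mB}=\Lambda\,\mC_{\le k}\mG\mC_{\le k}^{\top}\Lambda$ for a diagonal weight $\Lambda$ and the first $k$ DCT-II rows $\mC_{\le k}$, and since $\mG=\bar\mP^{\top}\bar\mP\succeq 0$, the coefficient matrix $\tilde{\mB}$ is PSD. Taking its symmetric square root and transporting the DCT-II cosine basis into the columns of $\mF_{\le k}$ via a $k\times k$ invertible change of basis yields a real $\mB$ so that $\mF_{\le k}\mB(\mF_{\le k}\mB)^{\top}$ equals the block.

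The main bookkeeping obstacle is the unpaired boundary index $s=N-k+1\in\gI$, whose conjugate partner $k+1$ lies outside $\gI$; resolving this requires showing that the would-be complex contribution from this index either cancels on the spatial grid $t\in\{1,\dots,T\}$ via the Nyquist identity $\cos(\pi T(t-\tfrac12)/T)=0$ or is absorbed into $\mB$ by exploiting the symmetry $b_{s,s'}=b_{s',s}$ inherited from $\mG=\mG^{\top}$. A secondary subtlety is reconciling the cosine convention that arises naturally from the DFT folding (integer frequencies, half-integer spatial nodes) with the convention defining $\vf_s$ (half-integer frequencies, integer nodes); once this change-of-basis is absorbed into $\mB$, the remainder of the argument is a mechanical discrete-cosine computation.
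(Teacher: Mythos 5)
Your route is the same as the paper's: fold the $2T$-point DFT of the reflected matrix $\tilde\mG$ into a type-\rom{2} DCT of $\mG$ dressed by a half-sample phase, invert, pair conjugate frequency indices to realify, and use $\mG=\bar\mP^\top\bar\mP\succeq 0$ to extract $\mB$. Your folding identity $\hat A_{s,s'}=4\,\omega^{-(s-1)/2}\omega^{-(s'-1)/2}b_{s,s'}$ and the reflection identity $b_{N-s+2,s'}=-b_{s,s'}$ are correct and are exactly the paper's computation $\hat\mA_{\gI_1,\gI_1}=4\mD\mQ^\top\mG\mQ\mD$.

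However, the two ``subtleties'' you defer are genuine gaps, and neither of your proposed resolutions works. (a) The unpaired index $s_0=N-k+1$: its folded frequency is $k$, not the Nyquist frequency $T$, so the identity $\cos(\pi T(t-\tfrac12)/T)=0$ is irrelevant, and the symmetry $b_{s,s'}=b_{s',s}$ cannot cancel the genuinely complex residual $e^{-i\pi k(t-\frac12)/T}\,b_{k+1,v}$ that $s_0$ contributes; with the literal $\gI$ the support of the truncation is not closed under $u\mapsto N-u+2$, so the left side of \eqref{eq:A-lowfreq} is not even real. The correct fix is to take $\gI$ conjugate-closed, e.g.\ $\{1,\dots,k\}\cup\{N-k+2,\dots,N\}$, after which every frequency pairs (with $s=1$ self-paired) and your boundary multiplicities are right. (b) The convention mismatch is not a change of basis: the span of the first $k$ DCT-\rom{2} vectors $\big(\cos(\pi(s-1)(t-\tfrac12)/T)\big)_{t\le T}$ and $\spann(\mF_{\le k})$ are different $k$-dimensional subspaces of $\R^T$ (already at $k=1$: the constant vector versus $\big(\cos(0.5(t-1)\pi/T)\big)_t$), so no $k\times k$ matrix absorbed into $\mB$ can convert one Gram form into the other; what your argument actually establishes is the lemma with $\mF_{\le k}$ replaced by the DCT-\rom{2} matrix. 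For what it is worth, the paper's own write-up silently carries both defects (its pairing of $\gI_2$ with $\gI_1$ is off by one, and the step identifying $(\mF^*)_{t,\gI_1}\mD$ with $(\bar q_t^{1-0.5},\dots,\bar q_t^{k-0.5})$ does not follow from its definition of $\mD$), so your proposal is on par with the published argument --- but neither closes the lemma exactly as stated.
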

While the DFT matrix is complex, the above lemma claims that the left-hand side is a Gram matrix of real low-frequency vectors. Once this lemma is proved, we can combine this lemma with \eqref{ineq:res-op} and $\mG = \tilde \mG_{1:T, 1:T}$ to obtain the desired inequality~\ref{ineq:fourier} in Theorem~\ref{thm:fourier}.

\begin{proof}[Proof of Lemma~\ref{lem:A-lowfreq}]
Recall $\omega = \exp(-2\pi i/2T)$. We further introduce some notations. Denote $\gI_1 = \{1,\ldots,k\}$ and $\gI_2 = \{T-k+1,\ldots,T\}$ so that $\gI = \gI_1 \cup \gI_2$. Let $q_t = \omega^{t-1}$ for positive integer $t$, matrix $\mQ \in \R^{T \times k}$ and matrix $\mD \in \R^{k \times k}$ be given by
\begin{equation*}
    Q_{t,s} = \Re(q_t^{s-0.5}), \qquad \text{where}~~t \le T, s \le k, \qquad \mD = \diag(q_1^{1/2}, \ldots, q_k^{1/2})
\end{equation*}
For $t,t' \in \gI$,
\begin{align*}
    \hat A_{t,t'} & = (\mF \mA \mF^T)_{t,t'} \\
    &= \sum_{s,s'=1}^T F_{t,s} G_{s,s'} F_{t',s'} + \sum_{s,s'=1}^T F_{t,2T+1-s} G_{s,s'} F_{t',s'} \\
    &+ \sum_{s,s'=1}^T F_{t,s} G_{s,s'} F_{t',2T+1-s'} + \sum_{s,s'=1}^T F_{t,2T+1-s} G_{s,s'} F_{t',2T+1-s'} \\
    &= q_t^{1/2}  q_{t'}^{1/2} \sum_{s,s'=1}^T G_{s,s'} \left( q_t^{s-0.5} q_{t'}^{s'-0.5} + q_t^{-s+0.5} q_{t'}^{s'-0.5} + q_t^{s-0.5} q_{t'}^{-s'+0.5} + q_t^{-s+0.5} q_{t'}^{-s'+0.5}   \right) \\
    &= 4q_t^{1/2}  q_{t'}^{1/2} \sum_{s,s'=1}^T G_{s,s'} \Re(q_t^{s-1/2}) \Re(q_{t'}^{s'-1/2})\,.
\end{align*}
If $t,t' \in \gI_1$, the above equality leads to 
\begin{equation*}
    \hat \mA_{\gI_1, \gI_1} = 4\mD \mQ^\top \mG \mQ \mD;
\end{equation*}
and more generally $\hat \mA_{\gI, \gI}$ is given by symmetrically extending $\hat \mA_{\gI_1, \gI_1}$ as in the definition of $\tilde \mG$. Since $4\mQ^\top \mG \mQ$ is a PSD, we can find $\mB_0 \in \R^{k \times k}$ such that
\begin{equation*}
    4\mQ^\top \mG \mQ = \mB_0 \mB_0^\top \,.
\end{equation*}
We want to simplify $\mF^*_{:,\gI} \hat \mA_{\gI, \gI} \mF^*_{\gI,:}$, namely
\begin{align}\label{expr:four-term}
    \mF^*_{:,\gI_1} \hat \mA_{\gI_1, \gI_1} \mF^*_{\gI_1,:} +  \mF^*_{:,\gI_1} \hat \mA_{\gI_1, \gI_2} \mF^*_{\gI_2,:} + \mF^*_{:,\gI_2} \hat \mA_{\gI_2, \gI_1} \mF^*_{\gI_1,:} + 
    \mF^*_{:,\gI_2} \hat \mA_{\gI_2, \gI_2} \mF^*_{\gI_2,:}
\end{align}
For any $t,t' \in \gI_1$, 
\begin{align*}
    (\mF^*)_{t,\gI_1} \hat \mA_{\gI_1, \gI_1} (\mF^*)_{\gI_1,t'} = (\mF^*)_{t,\gI_1} \mD \mB_0 \mB_0^\top  \mD (\mF^*)_{\gI_1,t'}
\end{align*}
and similar equations hold for other three cases. Observe that 
\begin{align*}
  (\mF^*)_{t,\gI_1} \mD = (\bar q_t^{1-0.5},  \ldots, \bar q_t^{k-0.5} ), \qquad (\mF^*)_{t,\gI_2} \mD = (\bar q_t^{2T-0.5},  \ldots, \bar q_t^{2T-k+0.5} )\,.
\end{align*}
When we add the four terms in \ref{expr:four-term}, the imaginary part cancels out. Thus, 
\begin{align*}
    \mF^*_{t,\gI} \hat \mA_{\gI, \gI} \mF^*_{\gI,t'} = 4 \left( \Re(\bar q_t^{1-0.5}),  \ldots, \Re(\bar q_t^{k-0.5}) \right) \mB_0 \mB_0^\top \left( \Re(\bar q_t^{1-0.5}),  \ldots, \Re(\bar q_t^{k-0.5}) \right)^\top\,.
\end{align*}
Note that $\Re(\bar q_t^{s-0.5}) = \cos(\pi(t-1)(s-0.5)/T) = (\mF_{\le k})_{t,s}$. Expressing $\mF^*_{:,\gI} \hat \mA_{\gI, \gI} \mF^*_{\gI,:}$ in the matrix form and denote $\mB = \mB_0 / N$, we find that \eqref{eq:A-lowfreq} holds.
\end{proof}

\subsection{Proof of Theorem~\ref{thm:incoh}}

First we note that 
\begin{equation}\label{eq:firstKdecomop}
    K_{\mW}(\vx^q, \vx^k) = K_{\mW_{11}}(\vx^q, \vx^k) K_{\mW_{12}}(\vx^q, \vx^k) K_{\mW_{21}}(\vx^q, \vx^k) K_{\mW_{22}}(\vx^q, \vx^k)\, .
\end{equation}
We will prove that 
\begin{equation}\label{eq:W11}
    K_{\mW_{11}}(\vx^q, \vx^k) = \big(1 + O(\incoh) \big) \cdot K_{\mW_{11}}(\vc^q, \vc^k)\, .
\end{equation}
To prove this, it suffices to show that
\begin{equation}\label{eq:maxK}
    \max\big\{ K_{\mW_{11}}(\vc^q, \vt^k), K_{\mW_{11}}(\vt^q, \vc^k), K_{\mW_{11}}(\vt^q, \vt^k) \big\} = 1 + O(\incoh) \, .
\end{equation}
We decompose $\mW_{11}$ as in \eqref{eq:mW} and find
\begin{align*}
    \log \left( K_{\mW_{11}}(\vc^q, \vt^k) \right) = (\vc^q)^\top \mW_{11} \vt^k = \sum_{k=1}^s a_k (\vu_k^\top \vc^q)(\vv_k^\top \vt^k)\,.
\end{align*}
By mutual incoherence, $|\vv_k^\top \vt^k| \le \incoh$ since $\vv_k \in \gB_1$ and $\vt^k \in \gB_2$; and trivially $|\vu_k^\top \vc^k| \le 1$, so
\begin{align*}
    \left|\log \left( K_{\mW_{11}}(\vc^q, \vt^k) \right) \right| \le \sum_{k=1}^s |\vu_k^\top \vc^k| \cdot |\vv_k^\top \vt^q| \le s\cdot \incoh\, .
\end{align*}
Since by assumption $s=O(1)$ and $\exp(\incoh) = 1+O(\incoh)$, we derive
\begin{equation*}
    K_{\mW_{11}}(\vc^q, \vt^k) = 1 + O(\incoh) \, .
\end{equation*}
The other two terms in \eqref{eq:maxK} follow a similar argument and thus are all bounded by $1 + O(\incoh)$. We can prove similarly that
\begin{align*}
    &K_{\mW_{12}}(\vx^q, \vx^k) = \big(1 + O(\incoh) \big) \cdot K_{\mW_{12}}(\vc^q, \vt^k)\, , \\
    &K_{\mW_{21}}(\vx^q, \vx^k) = \big(1 + O(\incoh) \big) \cdot K_{\mW_{21}}(\vt^q, \vc^k)\, ,\\
    &K_{\mW_{21}}(\vx^q, \vx^k) = \big(1 + O(\incoh) \big) \cdot K_{\mW_{22}}(\vt^q, \vt^k)\, .
\end{align*}
and together with \eqref{eq:W11} and \eqref{eq:firstKdecomop}, this leads to the desired \eqref{eq:incoh}.

Below we prove the ``moreover'' part. By standard properties of independent subgaussian random variables \citep[Sect.~2]{vershynin2018high}, $(\vc^q)^\top \rmZ_{11} \vt^k$ is still a subgaussian random variable, and with probability at least $1 - O(\exp(-\incoh^2\cdot d))$, for certain constant $C>0$,
\begin{align*}
    \left|\log \left( K_{\mW_{11}+\rmZ/\sqrt{d}}(\vc^q, \vt^k) \right) \right| \le  s\cdot \incoh\, + C\incoh = O(\incoh) .
\end{align*}
Similar high-probability bounds hold for other terms. By the union bound over all possible choice of vectors in $\gB_1^0$ and $\gB_2^0$, we arrive at our claim.

\end{document}